\begin{document}
\title{On soft power diagrams}
\author{Steffen Borgwardt}

\institute{S. Borgwardt \at Fakult\"at f\"ur Mathematik,
Technische~Universit\"at M\"{u}nchen, 80290 M\"{u}nchen,
Germany\\\email{borgwardt@ma.tum.de}, Tel.: $+89/28916876$}

\bibstyle

\maketitle
\begin{abstract}
Many applications in data analysis begin with a set of points in a Euclidean space that is partitioned into clusters. Common tasks then are to devise a classifier deciding which of the clusters a new point is associated to, finding outliers with respect to the clusters, or identifying the type of clustering used for the partition.

One of the common kinds of clusterings are (balanced) least-squares assignments with respect to a given set of sites. For these, there is a 'separating power diagram' for which each cluster lies in its own cell.

In the present paper, we aim for efficient algorithms for outlier detection and the computation of thresholds that measure how similar a clustering is to a least-squares assignment for fixed sites. For this purpose, we devise a new model for the computation of a 'soft power diagram', which allows a soft separation of the clusters with 'point counting properties'; e.g. we are able to prescribe how many points we want to classify as outliers.

As our results hold for a more general non-convex model of free sites, we describe it and our proofs in this more general way. Its locally optimal solutions satisfy the aforementioned point counting properties. For our target applications that use fixed sites, our algorithms are efficiently solvable to global optimality by linear programming.

\keywords{power diagrams \and least-squares assignments \and soft-margin separation  \and mathematical programming \and  computational geometry  \and machine learning}

\subclass{ 90C90 \and  90C46 \and 68Q32 } 
\end{abstract}

\section{Introduction}
The ability to extract new information from large data sets is one of the key steps in today's decision making processes. Such data sets are often represented as sets of points $X\subset \mathbb{R}^d$ in $d$-dimensional Euclidean space. In many settings, $X$ is already partitioned into clusters $C_1,\dots,C_k$. One of the frequent tasks then is the derivation of a so-called classifier, a rule that explains which of the existing clusters a new point in $\mathbb{R}^d$ should be assigned to. Further, the identification of outliers with respect to the given clusters or an identification of the clustering principles used to obtain $C_1,\dots,C_k$ are of interest.

In this paper, we are interested in data sets that are clustered as (noisy) balanced least-squares assignments with respect to a fixed set of sites $s_i\in \mathbb{R}^d$, one for each cluster. Least-squares assignments are one of the most popular clustering principles, e.g. used by the $k$-means algorithm. We will devise efficient algorithms for the above applications for this frequent special case. 

Our algorithms are based on a model that constructs a soft-separating power diagram for which each cluster lies in its own cell. The key feature of this approach is what we will call 'point counting properties' of the power diagram, which are necessary for our the applications we consider.

Our results also hold when the sites for the construction of the power diagram are not fixed beforehand. We then obtain a non-convex model, but for the sake of completeness, we present our approach to the case of fixed sites that we are interested in as a special case of this more general framework. This simpler situation arises in many real-world applications; let us motivate our contribution by such an application, and discuss in which ways we have to extend the state of the art.

\subsection{Motivation and state of the art}
Suppose there is a set of facilities $S:=\{s_1,\dots,s_k\}\subset \mathbb{R}^2$. They have to supply a set of customers $X:=\{x_1,\dots,x_n\}\subset \mathbb{R}^2$. Note that both sets are represented by their geographic locations in the Euclidean plane. 

Further, the facilities $s_i$ cannot supply more than a total of $\kappa_i^+\in \mathbb{N}$ of the customers, and may not supply less than a total of  $\kappa_i^-\in \mathbb{N}$ of them, for efficiency reasons. Finally, assume that the transport from a facility to a customer underlies a quadratic loss, i.e. the cost of supplying $x_l$ from $s_i$ can be measured by the square of the Euclidean distance $\|s_i-x_l\|^2$. A typical example for such a loss arises in energy distribution. Generally the square of the Euclidean distance is also used to treat customers far from a facility in a fair way.

A most efficient assignment $\mathcal{C}:=(C_1,\dots,C_k)$ of the customers is a {\bf balanced least-squares assignment}\footnote[1]{The terms in bold throughout this section are defined more formally in Chapters $2$ and $3$.} computed as $$\mathcal{C}:=\arg\min\limits_{(C_1,\dots,C_k)} \sum\limits_{i=1}^k\sum\limits_{x_l\in C_i} \|s_i-x_l\|^2 \text{ w.r.t } \kappa_i^- \leq |C_i| \leq \kappa_i^+ \text{ for all } i\leq k,$$
where $|C_i|$ denotes the number of points, respectively customers, in $C_i$.

Such a clustering $\mathcal{C}$ allows a special polytopal cell decomposition of the underlying space $\mathbb{R}^d$, called a {\bf power diagram} $P:=(P_1,\dots,P_k)$, such that each cluster $C_i$ lies in exactly one of the cells $P_i$ of the diagram \cite{aha-98}. Classically its cells are defined by sites $s_1,\dots,s_k\in\mathbb{R}^d$ and weights $w_1,\dots,w_k\geq 0$  as 
 $$P_i:=\{x \in \mathbb{R}^d: \|s_i-x\|^2 -w_i \leq \|s_j-x\|^2-w_j  \quad\text{ for all } j \in \{1,\dots,k\} \backslash \{i\}\}.$$
We call this a {\bf separating power diagram} for $\mathcal{C}$. It can be constructed with respect to the sites of the least-squares assignment. 
 Figure \ref{fig:PDexampleWithSites} depicts an example.  The corresponding clusterings also appear as the extreme points in the studies of special geometric bodies \cite{b-10,bg-11}.

\begin{figure}
\begin{center}
\begin{tikzpicture}[scale=0.6]
\useasboundingbox (-5,-8) rectangle (6,5);
\fill [blue] (-1.5,0.5) circle (4pt);
\fill [blue] (-1.5,-1) circle (4pt);
\fill [blue] (-3.5,0) circle (4pt);
\fill [blue] (-2.5,-1) circle (4pt);
\fill [blue] (-2,-2) circle (4pt);
\fill [blue] (-3.5,-1) circle (4pt);
\fill [blue] (-4,-1.5) circle (4pt);
\fill [blue] (-1,0) circle (4pt);
\fill [blue] (-2,-0.75) circle (4pt);

\fill [green] (4.5,-1) circle (4pt);
\fill [green] (5,1) circle (4pt);
\fill [green] (4,1) circle (4pt);
\fill [green] (3.5,1) circle (4pt);
\fill [green] (4.5,0.5) circle (4pt);
\fill [green] (4,1) circle (4pt);
\fill [green] (3,0) circle (4pt);

\fill [red] (0,-5) circle (4pt);
\fill [red] (1,-6) circle (4pt);
\fill [red] (2,-5.5) circle (4pt);
\fill [red] (1,-5) circle (4pt);
%\fill [red] (-1,-7) circle (4pt);
\fill [red] (-2,-5.5) circle (4pt);

\fill [yellow] (0,3) circle (4pt);
\fill [yellow] (1,4) circle (4pt);
\fill [yellow] (2,3.5) circle (4pt);
\fill [yellow] (1,3) circle (4pt);
\fill [yellow] (-1,3) circle (4pt);
\fill [yellow] (-1.5,4) circle (4pt);

\draw (-3,3) -- (0,1);
\draw (4,3) -- (0,1);
\draw (0,1) -- (1.5,-2);
\draw (1.5,-2) -- (-3.5,-4);
\draw (1.5,-2) -- (5, -5);

%\fill [yellow] (1.5,3.5) circle (7pt);
%\fill [blue] (-1.5,-1) circle (7pt);
%\fill [green] (2.7,1.1) circle (7pt);
%\fill [red] (-0.74,-2.91) circle (7pt);

\fill [yellow] (1.5,3) circle (7pt);
\fill [blue] (-1.5,-1.5) circle (7pt);
\fill [green] (2.7,0.6) circle (7pt);
\fill [red] (-0.74,-3.41) circle (7pt);
\end{tikzpicture}
  \end{center}
\caption{A separating power diagram for a balanced least-squares assignment in $\mathbb{R}^2$. The large dots are the sites (or facilities) used for both the least-squares assignment and the construction of the diagram.}
\label{fig:PDexampleWithSites}
\end{figure}
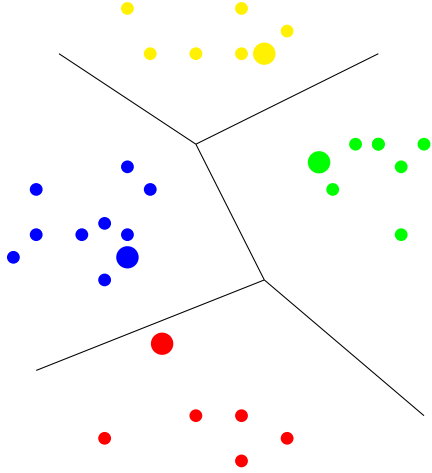

Power diagrams are a classical data structure in computational geometry, and generalize the well-known Voronoi diagrams. They arise in many applications. See \cite{a-87} for a survey. In machine learning, they are the classifiers of the so-called alltogether models for multiclass support vector machines \cite{v-98,ww-98,bb-99,cs-01}.  In the literature, these kinds of classifiers also appear as piecewise-linear separability \cite{bm-92} and full $S$-induced cell decompositions \cite{b-10}.

Alternatively, a power diagram can be defined by a special set of hyperplanes to separate the cells from each other. It is natural and common practice to use this set of hyperplanes to obtain a classifier for the different clusters: A new customer $x\in \mathbb{R}^2$ is assigned to the facility of the cell that the customer lies in.  In the context of our example, we are interested in finding a 'best' classifier for the assignment of new customers to one of the existing facilities.

The quality of such a classifier is intuitively measured by the {\bf margin}, which is the smallest Euclidean distance of a point to the boundary of its cell: The larger the margin, the better a classifier typically performs in practice.  Figure \ref{fig:PDexampleWithMargin} depicts the margin for the example in Figure \ref{fig:PDexampleWithSites}  by the width of the gray area around the hyperplanes. For the presented example, a classification task then would be to compute a separating power diagram of optimal margin for a given balanced least-squares assignment.

\begin{figure}
\begin{center}
\begin{tikzpicture}[scale=0.6]
\useasboundingbox (-5,-8) rectangle (6,5);
\draw[line width=1.33cm, color=gray] (-3,3) -- (0,1);
\draw[line width=1.33cm, color=gray]  (4,3) -- (0,1);
\draw[line width=1.33cm, color=gray]  (0,1) -- (1.5,-2);
\draw[line width=1.33cm, color=gray]  (1.5,-2) -- (-3.5,-4);
\draw[line width=1.33cm, color=gray]  (1.5,-2) -- (5, -5);
\draw (-3,3) -- (0,1);
\draw (4,3) -- (0,1);
\draw (0,1) -- (1.5,-2);
\draw (1.5,-2) -- (-3.5,-4);
\draw (1.5,-2) -- (5, -5);

\fill [blue] (-1.5,0.5) circle (4pt);
\fill [blue] (-1.5,-1) circle (4pt);
\fill [blue] (-3.5,0) circle (4pt);
\fill [blue] (-2.5,-1) circle (4pt);
\fill [blue] (-2,-2) circle (4pt);
\fill [blue] (-3.5,-1) circle (4pt);
\fill [blue] (-4,-1.5) circle (4pt);
\fill [blue] (-1,0) circle (4pt);
\fill [blue] (-2,-0.75) circle (4pt);

\fill [green] (4.5,-1) circle (4pt);
\fill [green] (5,1) circle (4pt);
\fill [green] (4,1) circle (4pt);
\fill [green] (3.5,1) circle (4pt);
\fill [green] (4.5,0.5) circle (4pt);
\fill [green] (4,1) circle (4pt);
\fill [green] (3,0) circle (4pt);

\fill [red] (0,-5) circle (4pt);
\fill [red] (1,-6) circle (4pt);
\fill [red] (2,-5.5) circle (4pt);
\fill [red] (1,-5) circle (4pt);
%\fill [red] (-1,-7) circle (4pt);
\fill [red] (-2,-5.5) circle (4pt);

\fill [yellow] (0,3) circle (4pt);
\fill [yellow] (1,4) circle (4pt);
\fill [yellow] (2,3.5) circle (4pt);
\fill [yellow] (1,3) circle (4pt);
\fill [yellow] (-1,3) circle (4pt);
\fill [yellow] (-1.5,4) circle (4pt);
\end{tikzpicture}
  \end{center}
\caption{A separating power diagram for four clusters in $\mathbb{R}^2$. The width of the gray area around the hyperplanes of the diagram depicts the margin.}
\label{fig:PDexampleWithMargin}
\end{figure}
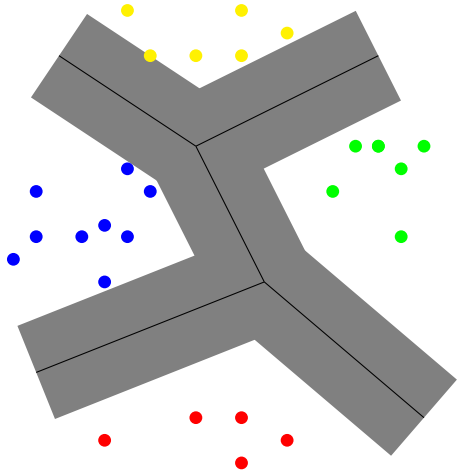

On the other hand, most real-world data sets are not 'that nice': While least-squares assignments are one of the widely-used clustering concepts,  a lot of data sets are noisy and have misclassified points. Further, there often are intentional exceptions. In our example, there may be contracts that fix that certain customers have to be supplied from a certain facility. Figure \ref{fig:ProblemSituation} depicts an example for this situation. Except for a few 'misclassified points', the clustering is identical to the least-squares assignment in Figure \ref{fig:PDexampleWithSites}. Still, a separating power diagram cannot be constructed.

Here some natural questions arise: Which customer assignments prevent the construction of a separating power diagram? Which customer assignments are 'worst' in the supply plan? These questions are intimately related to outlier detection. Answers would provide opportunites for improving the current supply plan. If there are a lot of bad assignments of customers to facilities, one could ask even more generally: How 'similar' is the supply plan actually still to a least-squares assignment? If a viable measure for this similarity (which we come up with in this paper) is low, it may be best to come up with a completely new supply plan.

Answers to these questions are at the core of the present paper. For this, one has to use to the concept of {\bf soft-margin separation}, which allows for the construction of a classifier despite having some misclassified points. Let us briefly turn to (multiclass) support vector machines which use this concept. 

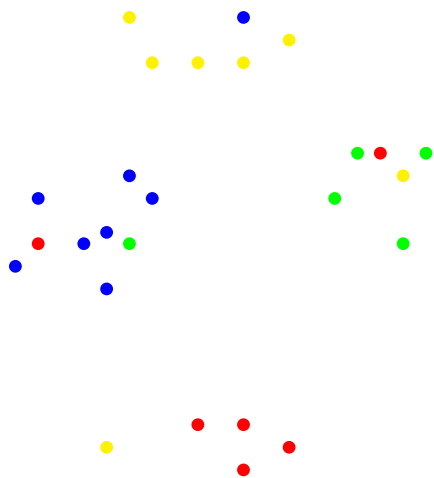
\begin{figure}
\begin{center}
\begin{tikzpicture}[scale=0.6]
\useasboundingbox (-5,-8) rectangle (6,5);
\fill [blue] (-1.5,0.5) circle (4pt);
\fill [green] (-1.5,-1) circle (4pt);
\fill [blue] (-3.5,0) circle (4pt);
\fill [blue] (-2.5,-1) circle (4pt);
\fill [blue] (-2,-2) circle (4pt);
\fill [red] (-3.5,-1) circle (4pt);
\fill [blue] (-4,-1.5) circle (4pt);
\fill [blue] (-1,0) circle (4pt);
\fill [blue] (-2,-0.75) circle (4pt);

\fill [green] (4.5,-1) circle (4pt);
\fill [green] (5,1) circle (4pt);
\fill [red] (4,1) circle (4pt);
\fill [green] (3.5,1) circle (4pt);
\fill [yellow] (4.5,0.5) circle (4pt);
\fill [green] (3,0) circle (4pt);

\fill [red] (0,-5) circle (4pt);
\fill [red] (1,-6) circle (4pt);
\fill [red] (2,-5.5) circle (4pt);
\fill [red] (1,-5) circle (4pt);
%\fill [red] (-1,-7) circle (4pt);
\fill [yellow] (-2,-5.5) circle (4pt);

\fill [yellow] (0,3) circle (4pt);
\fill [blue] (1,4) circle (4pt);
\fill [yellow] (2,3.5) circle (4pt);
\fill [yellow] (1,3) circle (4pt);
\fill [yellow] (-1,3) circle (4pt);
\fill [yellow] (-1.5,4) circle (4pt);

%\draw (-3,3) -- (0,1);
%\draw (4,3) -- (0,1);
%\draw (0,1) -- (1.5,-2);
%\draw (1.5,-2) -- (-3.5,-4);
%\draw (1.5,-2) -- (5, -5);

\end{tikzpicture}
  \end{center}
\caption{A clustering of four clusters in $\mathbb{R}^2$, which only differs slightly from the clusterings in Figures \ref{fig:PDexampleWithSites} and \ref{fig:PDexampleWithMargin}, but for which there is no separating power diagram.}
\label{fig:ProblemSituation}
\end{figure}

Binary classification tasks (i.e. $k=2$) have been studied well. One uses penalty terms for misclassified points to bound and control the number of {\bf support vectors} and {\bf margin errors} that are allowed in the construction of a separating hyperplane for the two clusters \cite{cv-95,sswb-00}: Informally, support vectors are the points of the data set whose removal would change the optimal separating hyperplane, margin errors are the points which lie within a distance of at most the margin of the separating hyperplane or are on the wrong side of the separating hyperplane. Note that there is a direct tradeoff in between the margin and the number of margin errors; the larger the margin, the more margin errors exist. 

For the case $k>2$ that we are interested in, soft margin separation is much more complicated. See \cite{hl-02} for a short survey. This begins with different possible definitions of what margin errors are: A first approach is to measure the misclassification of a point with respect to each separating hyperplane \cite{v-98,ww-98}. A second one is to only consider the 'worst' violation of a separating hyperplane by a point \cite{cs-01}.

Recall that alltogether models for multiclass classification construct power diagrams and allow for soft-margin separation \cite{v-98,ww-98,cs-01}. Let us explain why transferring these to our simpler setting of fixed sites does not help us: They do not use a shared margin for the cluster pairs, but instead optimize the sum of squared pairwise margin sizes. This fact means that one loses the ability to quantitatively compare the misclassification of points with each other, which is fundamental for our applications in outlier detection and evaluting the similarity of the underlying clustering to a least-squares assignment (and actually for assessing the quality of classifiers themselves, as well -- see e.g. \cite{tkht-09,tkht2-09}). Additionally, we desire further special 'point counting properties' for the power diagrams we construct, and these cannot be achieved by transferring existing approaches. 

These are the key reasons for coming up with a new model. We now exhibit our results and give a brief outline.

%As we will be specifically interested in classification for fixed sites (and do not care about hardness of our model for free sites), we do not need to compromise when constructing our model. Thus we are able to use shared margins instead of a sum of margins, which is the basis for devising the desired point counting properties in the form of bounds on the number of margin errors. We now exhibit our results and give a brief outline.

\subsection{Results and outline}

In the present paper, we provide a model for computing a separating power diagram that implements a shared margin among cluster pairs, for the two natural interpretations of margin errors in a multiclass scenario. We call the classifiers that we construct {\bf soft power diagrams}. Figure \ref{fig:SoftPD} depicts an example for the clustering of Figure \ref{fig:ProblemSituation}. 

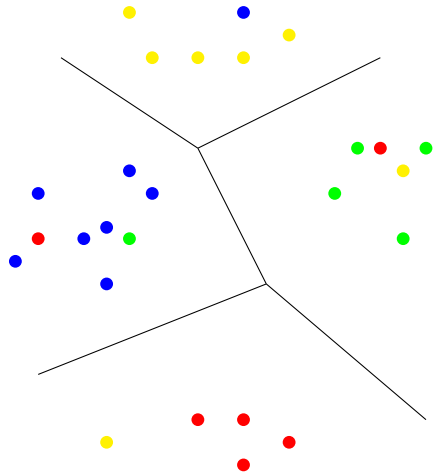
\begin{figure}
\begin{center}
\begin{tikzpicture}[scale=0.6]
\useasboundingbox (-5,-8) rectangle (6,5);
\fill [blue] (-1.5,0.5) circle (4pt);
\fill [green] (-1.5,-1) circle (4pt);
\fill [blue] (-3.5,0) circle (4pt);
\fill [blue] (-2.5,-1) circle (4pt);
\fill [blue] (-2,-2) circle (4pt);
\fill [red] (-3.5,-1) circle (4pt);
\fill [blue] (-4,-1.5) circle (4pt);
\fill [blue] (-1,0) circle (4pt);
\fill [blue] (-2,-0.75) circle (4pt);

\fill [green] (4.5,-1) circle (4pt);
\fill [green] (5,1) circle (4pt);
\fill [red] (4,1) circle (4pt);
\fill [green] (3.5,1) circle (4pt);
\fill [yellow] (4.5,0.5) circle (4pt);
\fill [green] (3,0) circle (4pt);

\fill [red] (0,-5) circle (4pt);
\fill [red] (1,-6) circle (4pt);
\fill [red] (2,-5.5) circle (4pt);
\fill [red] (1,-5) circle (4pt);
%\fill [red] (-1,-7) circle (4pt);
\fill [yellow] (-2,-5.5) circle (4pt);

\fill [yellow] (0,3) circle (4pt);
\fill [blue] (1,4) circle (4pt);
\fill [yellow] (2,3.5) circle (4pt);
\fill [yellow] (1,3) circle (4pt);
\fill [yellow] (-1,3) circle (4pt);
\fill [yellow] (-1.5,4) circle (4pt);

\draw (-3,3) -- (0,1);
\draw (4,3) -- (0,1);
\draw (0,1) -- (1.5,-2);
\draw (1.5,-2) -- (-3.5,-4);
\draw (1.5,-2) -- (5, -5);

\end{tikzpicture}
  \end{center}
\caption{For the clustering of Figure \ref{fig:ProblemSituation}, the power diagram of Figure \ref{fig:PDexampleWithSites} is a soft power diagram with six points outside of their clusters' respective cells.}
\label{fig:SoftPD}
\end{figure}

We are interested in applications that use fixed sites for the construction of such a power diagram, but we will introduce our model for free sites, as we are able to prove our results for this more general setting. Then we transfer the results to fixed sites, for which the programs we derive are efficiently solvable to global optimality by linear programming. For free sites, the use of a shared margin among cluster pairs makes the model hard to solve globally, but reasonable to solve locally -- which essentially is the best one can hope for.

The local optima of our model exhibit a special {\bf point counting property}, which is our main result denoted in Theorems \ref{thm:NuProperties} and \ref{thm:NuProperties2}: We provide programs $(P_{\text{MME}})$ and $(P_{\text{MEP}})$ that use a prescribed parameter $t\in \mathbb{N}$. A locally optimal solution then is a power diagram with a (not-necessarily positive) margin such that the number of margin errors in the power diagram is at most $t$. Further, the number of support vectors is at least $t+1$.

This gives rises to efficient algorithms for outlier detection and the computation of what we call {\bf least-squares thresholds}. The latter correspond to the minimal fraction of points which are margin errors in the construction of a soft power diagram with non-negative margin. These thresholds tell us how similar a given clustering is to a balanced least-squares assignment for a given set of sites.

We begin Chapter $2$ with some basic notation and necessary terminology to use power diagrams as classifiers. We use a direct, geometric approach to obtain a model for the construction of a separating power diagram of maximal margin when the underlying clustering is a balanced least-squares assignment. As a byproduct, we obtain an efficient test for the existence of a separating power diagram for a given clustering for any free sites (Lemma \ref{lem:25}). 

Chapter $3$ contains our main results: We construct two versions of our model for the derivation of a soft power diagram, and prove the point counting properties of the locally optimal solutions for these programs. We also show that these properties transfer to the case of fixed sites, for which we obtain efficient models based on linear programming.

In Chapter $4$, we use the point counting properties of our algorithms for two applications with fixed sites: First, we describe an approach to outlier detection (Algorithm SPD-OD, Section $4.1$). Then we discuss Algorithm LS-SPD for the computation of least-squares thresholds (Section $4.2$). We prove that this is efficiently possible by solving a number of linear programs that is logarithmic in the theoretically maximal number of margin errors (Theorem \ref{thm:last}). Then we present some empirical results that validate that these thresholds are intuitive apriori measures for the quality of a soft power diagram as a classifier. We conclude our discussion with a proof of concept for the solvability of our model for free sites to local optimality.

\section{Separating power diagrams}
We begin with some necessary terminology. Let throughout the present paper $k\geq 2$, $n\geq 2$ and $d\geq 1$ with $k,n,d \in \mathbb{N}$. Let $X:=\{x_1,\dots,x_n\}\subset \mathbb{R}^d$ be a set of vectors ({\bf point set} or {\bf data set}) in the $d$-dimensional Euclidean space. Further, let $S:=\{s_1,\dots,s_k\}$ with $s_i \neq s_j$ for $i\neq j$ be a set of $k$ distinct {\bf sites} in $\mathbb{R}^d$. 

We now define some basic notation for the discussion of a (partitioning) clustering: A tuple $\mathcal{C}:=(C_1,\dots,C_k)$ is a {\bf $k$-clustering} of $X$ if and only if $$C_i \subset X \text{ for all } i\leq k,\quad C_i \cap C_j = \emptyset \text{ for } i \neq j, \text{ and }\bigcup\limits_{i=1}^k C_i = X.$$ $C_i$ is the {\bf $i$-th cluster} of $\mathcal{C}$.

A $k$-clustering of $X$ is defined by an $X,k$-label function $c:\{1,\dots,n\} \rightarrow \{1,\dots,k\}$, which satisfies
$c(l)=i \Leftrightarrow x_l\in C_i.$
We use $|C_i|$ to refer to the number of points in cluster $C_i$. The tuple $|\mathcal{C}|:=(|C_1|,\dots,|C_k|)$ is the {\bf shape} of $\mathcal{C}$. Finally, we denote the arithmetic mean of the points in cluster $C_i$ by $c_i$. Throughout this paper, we assume that $c_i\neq c_j$ for $i\neq j$, which is natural when wanting to derive a classifier distinguishing these clusters.

Typically, $X$ and $k$ will be clear from the context. We then use the simpler wording clustering $\mathcal{C}$, and label function or simply label. Often, each of the clusters $C_i$ corresponds to a site $s_i$ of the same index. We then call the points $x\in C_i$  associated to $s_i$.

In our geometric approach, hyperplanes, halfspaces, and the interior of halfspaces are very important. We denote the interior of a (convex, continuous) set $P\subset \mathbb{R}^d$ as $\text{int}(P)$. A hyperplane $H_{a,\gamma}$ in $\mathbb{R}^d$ is the set $H_{a,\gamma}:=\{x\in \mathbb{R}^d: a^Tx=\gamma\}.$

Such a hyperplane $H_{a,\gamma}$ separates two sets $P_1, P_2 \subset \mathbb{R}^d$  in $\mathbb{R}^d$ (or is a separating hyperplane for $P_1, P_2$) if and only if
$P_1\subset  H^{\leq}_{a,\gamma}$ and $P_2\subset H^{\geq}_{a,\gamma}$. If $P_1\subset  \text{int}( H^{\leq}_{s,\gamma})$ and $P_2\subset \text{int}(H^{\geq}_{a,\gamma})$, then $H_{a,\gamma}$  separates $P_1$ and $P_2$ strictly (or is a strictly-separating hyperplane for $P_1, P_2$).

\subsection{Power diagrams}
We are now ready to turn to polyhedral cell complexes in $\mathbb{R}^d$. A well known special kind of these are Voronoi diagrams \cite{ak-99}. They appear in many applications and algorithms such as the classical $k$-means algorithm. 

A natural and powerful generalization of Voronoi diagrams are the so-called {\bf power diagrams} \cite{a-87}. The cell $P_i$ of such a power diagram is defined by a site $s_i\in \mathbb{R}^d$ and a real weight $w_i \geq 0$. It consists of all the points $x\in \mathbb{R}^d$ which are 'closest' to the site, where this distance is measured by the so-called {\bf power function} $$p_i(x):=\|s_i-x\|^2 -w_i.$$
Informally, the power function is the distance of $x$ to the closest point on a sphere of radius $\sqrt{w_i}$ around site $s_i$. We notate the set of weights as $\omega:=(w_1,\dots,w_k)^T$ and obtain the following formal definition.

\begin{definition}[$(S,\omega)$-power diagram]
\\An {\bf $(S,\omega)$-power diagram} is a decomposition $P:=(P_1,\dots,P_k)$ of $\mathbb{R}^d$ with $$P_i:=\{x \in \mathbb{R}^d:
\|s_i-x\|^2 -w_i \leq \|s_j-x\|^2-w_j  \quad\text{ for all } j \in \{1,\dots,k\} \backslash \{i\}\}$$
if $\text{dim} (P_i)=d$ for all $i \in \{1,\dots,k\}$.
\end{definition}

By $P$ being a decomposition of $\mathbb{R}^d$, we have $\text{int}(P_i)\cap \text{int}(P_j)=\emptyset$ for all $i\neq j$. This property combined with $\text{dim} (P_i)=d$ implies $s_i \neq s_j$ for $i\neq j$.

Before we turn to applications of these diagrams in data analysis tasks, let us provide a new, alternate representation, which will prove helpful in our analysis. We use the notation $\gamma:=(\gamma_1,\dots,\gamma_k)^T$ with $\gamma_i\in \mathbb{R}$ for all $i\leq k$ throughout the paper.

\begin{definition}[$(S,\gamma)$-power diagram]\label{def:newPD}
\\An {\bf $(S,\gamma)$-power diagram} is a decomposition $P:=(P_1,\dots,P_k)$ of $\mathbb{R}^d$ with $$P_i:=\{x \in \mathbb{R}^d: (s_j-s_i)^Tx\leq \gamma_j-\gamma_i \quad \text{ for all } j \in \{1,\dots,k\} \backslash \{i\}\}$$
if $\text{dim} (P_i)=d$ for all $i \in \{1,\dots,k\}$.
\end{definition}
Let us briefly confirm that the above definition of a power diagram is equivalent to the classical one. $P_i$ and $P_j$ are separated by the hyperplane \begin{eqnarray*}
H_{ij}& := &\{x \in \mathbb{R}^d: \|s_i-x\|^2 -w_i = \|s_j-x\|^2-w_j\} \\
& = & \{x \in \mathbb{R}^d: 2(s_j-s_i)^Tx = (s_j^Ts_j-w_j)-(s_i^Ts_i-w_i)\}
\end{eqnarray*}
Choosing$\gamma_i=\frac{1}{2}(s_i^Ts_i-w_i)$ for $i \leq k$ yields this new representation. We noted a strong connection of Definition \ref{def:newPD} to piecewise-linear separability \cite{bm-92}. A key difference lies in using weak inequalities instead of strict ones. This will help us in deriving a new program that is also able of construct a separating power diagram with points on the boundary of cells - sometimes these are the only separating power diagrams for a clustering. They cannot be found by the standard models in the literature.

When only the sites $S$, and not the specific representation of a power diagram, either by $\omega$ or $\gamma$, are of interest, we use the shorter notion $S$-power diagram. When we concern ourselves with applications where the sites are fixed or known from the context, we simply talk about a power diagram.

\subsection{Separating power diagrams}

A classical way to classify data is to find separating hyperplanes in between clusters. Let us define clusterings for which the hyperplanes of a power diagram have this separation property.

\begin{definition}[Separating power diagram]\label{def:spd}
\\Let $\mathcal{C}$ be a clustering of $X$ and let $P$ be a power diagram.
Then $\mathcal{C}$ allows the {\bf separating power diagram $P$} if $C_i\subset P_i$ for all $i\in \{1,\dots,k\}$ and $C_i\not\subset P_j$  for all $i \neq j$.
\end{definition}
We also say that $P$ is a separating power diagram for $\mathcal{C}$. Informally, for all $i\in \{1,\dots,k\}$, all points of cluster $C_i$ lie in $P_i$. Note that points may also lie on the boundary of the cell, i.e. $x\in P_i \not\Rightarrow x\in C_i$. On the other hand, the condition $C_i\not\subset P_j$  for all $i \neq j$ implies that not all points of $C_i$ lie on the common boundary of cells $P_i$ and $P_j$. If that was the case, $C_i$ would be fully contained in the separating hyperplane, invalidating the interpretation of the power diagram as a classifier.

If there are no points on the boundaries of the cells, all of the hyperplanes separate the corresponding clusters strictly. In this case, we talk about a strictly-separating power diagram. Figure \ref{fig:PDexampleWithSites} depicts an example.

Let us emphasize the strength of the separation property described in Definition \ref{def:spd}: It guarantees a lot more than the existence of a separating hyperplane for each pair of clusters, which can be demonstrated by explicitly constructing a clustering which allows pairwise separability of the clusters, but no separating power diagram. See \cite{b-10} for a provably minimal example. 

In fact, the property is tied to very special clusterings of point sets.  It is well-known that so-called least-squares assignments allow the construction of Voronoi diagrams such that each cluster lies in its own cell. The existence of a separating power diagram corresponds to the clustering being a {\bf balanced least-squares assignment.}

\begin{definition}[Balanced $(S,|\mathcal{C}|)$-least-squares assignment]
\\A clustering $\mathcal{C}$ is a balanced $(S,|\mathcal{C}|)$-least-squares assignment of $X$ if and only if \[\sum\limits_{i=1}^k \sum\limits_{x\in C_i}
\|s_i-x\|^2\] is minimal for all clusterings of $X$ of the same shape $|\mathcal{C}|$.
\end{definition}
The 'balanced' term refers to the minimality of the clustering with respect to all clusterings of the same shape, not all clusterings in general. Recall the program in Section $1.1$. Trivially its optimal clustering is a balanced least-squares assignment with respect to the cluster sizes of an optimal solution. We denote this 'induction' of cluster sizes as an $(S, |\mathcal{C}|)$-least-squares-assignment. Let $K:=(\kappa_1,\dots,\kappa_k)$ with $\kappa_i\in \mathbb{N}$ for all $i\leq k$. If we prescribe the cluster sizes to take values $|C_i|=\kappa_i$, we use the term $(S, K)$-least-squares-assignment.

In our notation, these least-squares assignments are connected to power diagrams in the following way \cite{aha-98}.

\begin{proposition}\label{thm:LSAandPD}
Let $X,S \subset \mathbb{R}^d$.
\begin{enumerate}
\item Let $K\subset \mathbb{N}^k$. Then there is an $(S,K)$-least-squares assignment $\mathcal{C}$ of $X$, and this $\mathcal{C}$ allows a separating $S$-power diagram.
\item If there is a separating $S$-power diagram $P$ for a clustering $\mathcal{C}$ of $X$, then $\mathcal{C}$ is an $(S,|\mathcal{C}|)$-least-squares assignment of $X$.
\end{enumerate}
\end{proposition}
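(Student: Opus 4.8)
The two parts of the proposition point in opposite directions, so I would treat them separately; Part~2 is short and self-contained, while Part~1 is the classical duality argument plus some care with degeneracies.

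\emph{Part 1.} For a prescribed $K=(\kappa_1,\dots,\kappa_k)$ with $\sum_i\kappa_i=n$ (otherwise no clustering of shape $K$ exists and there is nothing to prove), the set of $k$-clusterings of $X$ of shape $K$ is finite and nonempty, so $\sum_{i}\sum_{x\in C_i}\|s_i-x\|^2$ attains its minimum at some $\mathcal{C}$; by definition $\mathcal{C}$ is a balanced $(S,K)$-least-squares assignment. To produce a separating $S$-power diagram for it, the plan is to use linear-programming duality for the underlying transportation problem: variables $z_{li}\ge 0$ (fraction of $x_l$ sent to $s_i$), costs $\|s_i-x_l\|^2$, unit supply at each $x_l$, demand $\kappa_i$ at each $s_i$. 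The constraint matrix is totally unimodular, so an optimal vertex is integral and encodes $\mathcal{C}$. Taking optimal dual potentials $u_l$ (points) and $v_i$ (sites), one has $u_l+v_i\le\|s_i-x_l\|^2$ for all $l,i$, with equality whenever $x_l\in C_i$ by complementary slackness. Setting $w_i:=v_i-\min_j v_j\ge 0$ (a common shift of all weights leaves every power cell unchanged), for $x_l\in C_i$ and any $j$ one gets $p_i(x_l)=\|s_i-x_l\|^2-w_i=u_l-\min_j v_j\le\|s_j-x_l\|^2-w_j=p_j(x_l)$, i.e.\ $x_l\in P_i$; equivalently, $\gamma_i=\tfrac12(s_i^Ts_i-w_i)$ gives the $(S,\gamma)$-representation. (The same $w$ also arises as a maximizer of the concave potential $\sum_l\min_i(\|s_i-x_l\|^2-w_i)+\sum_i\kappa_i w_i$, whose optimality condition says precisely that some tie-breaking among equidistant sites realizes the prescribed cardinalities.)

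\emph{Part 1, the remaining conditions and the main obstacle.} What is still missing for Definition~\ref{def:spd} is that each $P_i$ is full-dimensional and that $C_i\not\subset P_j$ for $i\ne j$; I expect this to be the delicate point. If $C_i\subset P_j$ then, since also $C_i\subset P_i$, every point of $C_i$, and hence the mean $c_i$, lies on the separating hyperplane $H_{ij}$. I would rule this out using optimality together with the standing assumption $c_i\ne c_j$: a point of $C_i$ lying on $H_{ij}$ can be swapped into $C_j$ (and one of $C_j$ on $H_{ij}$ into $C_i$) at zero cost, so $\mathcal{C}$ may be re-chosen to avoid such boundary points; combined with a perturbation of $w$ inside the (full-dimensional) cone of feasible weight vectors, and using that the $c_i$ are pairwise distinct, this removes the degeneracies and simultaneously forces $\dim P_i=d$. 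Carrying out this bookkeeping cleanly is where I expect the real work to lie; the alternative is to cite that the full statement is a theorem of \cite{aha-98}.

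\emph{Part 2.} Let $P$ be a separating $S$-power diagram for $\mathcal{C}$ with weights $\omega$, and let $\mathcal{C}'$ be any clustering of $X$ with $|\mathcal{C}'|=|\mathcal{C}|$; write $c,c'$ for the two label functions. For every $x\in X$ we have $x\in C_{c(x)}\subset P_{c(x)}$, hence $\|s_{c(x)}-x\|^2-w_{c(x)}\le\|s_{c'(x)}-x\|^2-w_{c'(x)}$. Summing over $x\in X$ and using $\sum_{x\in X}w_{c(x)}=\sum_{i=1}^k|C_i|w_i=\sum_{i=1}^k|C_i'|w_i=\sum_{x\in X}w_{c'(x)}$ (equal shapes), the weight terms cancel and one is left with $\sum_{i}\sum_{x\in C_i}\|s_i-x\|^2\le\sum_i\sum_{x\in C_i'}\|s_i-x\|^2$. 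Since $\mathcal{C}'$ was an arbitrary clustering of shape $|\mathcal{C}|$, the clustering $\mathcal{C}$ minimizes the least-squares objective among all such clusterings, i.e.\ it is a balanced $(S,|\mathcal{C}|)$-least-squares assignment. Note that only the inclusions $C_i\subset P_i$ are used here, not the non-containment part of Definition~\ref{def:spd}.
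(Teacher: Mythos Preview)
The paper does not give its own proof of this proposition: it is stated as a known result and attributed to \cite{aha-98}. So there is no in-paper argument to compare against; the relevant benchmark is the Aurenhammer--Hoffmann--Aronov proof.

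Your Part~2 is correct and is exactly the standard argument: the power-function inequalities $p_{c(x)}(x)\le p_{c'(x)}(x)$ summed over $X$, with the weight terms cancelling because the two clusterings have the same shape. You are also right that only the inclusions $C_i\subset P_i$ are needed.

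Your Part~1 is the classical transportation/duality proof from \cite{aha-98}: total unimodularity gives an integral optimum, and optimal dual potentials furnish the weights via complementary slackness. One small clarification: once you have fixed $\mathcal{C}$ as a minimizer over integral assignments, it is automatically primal-optimal for the LP (its value equals the LP optimum by TU), so complementary slackness applies to it regardless of whether it is the vertex returned by your solver. The genuinely delicate issues you flag---$\dim P_i=d$ and $C_i\not\subset P_j$---are real and are precisely where Definition~\ref{def:spd} bites; your sketch of swapping boundary points at zero cost and perturbing within the cone of admissible weights is the right idea, but the cleanest route (and the one the paper implicitly takes) is indeed to cite \cite{aha-98} for the full statement, since the paper's standing assumption $c_i\neq c_j$ is used there to exclude the pathological case of a cluster lying entirely in a separating hyperplane.
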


\subsection{Maximum-margin power diagrams}
We now introduce a new model for the construction of a separating power diagram such that the minimal margin in between clusters is maximized. For fixed sites $S$, the model encompasses an efficient way of computing such a 'maximum-margin' separating $S$-power diagram corresponding to an $S$-least-squares assignment, as well as an efficient test for the existence of a separating power diagram (for any set of sites) for a given clustering. 

We start by recalling the necessary and sufficient conditions for $S$ and $\gamma$ yielding a separating power diagram for a set. Let $\mathcal{C}$ be a clustering of $X$. Then any $(S,\gamma)$ which satisfies
$$
\begin{array}{lrclcl}
  &(s_j-s_i)^Tx_l & \leq & \gamma_j-\gamma_i &  & (l\leq n,  j \leq k: j\neq i:=c(l))\\
& s_i & \neq & s_j & &  (i < j \leq k)
\end{array}
$$
yields a separating $(S,\gamma)$-power diagram. The given conditions guarantee that $x_l \in C_i$ is on the correct side of the hyperplane with normal $s_j-s_i\neq 0$ separating $P_i$ and $P_j$. The conditions $s_i \neq s_j$  can be replaced by linear constraints:

\begin{lemma}\label{lem:25}
Let $\mathcal{C}$ be a clustering of $X$. Then $\mathcal{C}$ allows a separating power diagram if and only if
$$
\begin{array}{lrclcl}
  &(s_j-s_i)^Tx_l & \leq & \gamma_j-\gamma_i &  & (l\leq n,  j \leq k: j\neq i:=c(l))\\
& (s_j-s_i)^T(c_j-c_i) & \geq & 1 & &  (i < j \leq k)
\end{array}
$$
has a feasible solution.
\end{lemma}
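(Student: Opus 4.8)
The plan is to show the equivalence by establishing that the two systems have feasible solutions under exactly the same circumstances. The "only if" direction is essentially bookkeeping: given a separating power diagram with representation $(S,\gamma)$, the first family of inequalities holds by the characterization recalled just before the lemma, and $s_i\neq s_j$ for $i<j$ means the hyperplane normals $s_j-s_i$ are nonzero; one then argues that $(s_j-s_i)^T(c_j-c_i)$ can be made $\geq 1$ after rescaling. The "if" direction is the substantive one: from a feasible solution of the linear system one must recover an actual separating power diagram, in particular verify the nondegeneracy condition $\dim(P_i)=d$ and the condition $C_i\not\subset P_j$ from Definition \ref{def:spd}.

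First I would address scaling. The key observation is that the power-diagram cell structure defined by $(S,\gamma)$ is invariant under the simultaneous scaling $(S,\gamma)\mapsto(\lambda S,\lambda\gamma)$ for $\lambda>0$: the defining inequalities $(s_j-s_i)^Tx\leq\gamma_j-\gamma_i$ just get multiplied by $\lambda$, and membership of the data points $x_l$ is preserved. So if $C_i\subset P_i$ for all $i$ in the original diagram, the same holds after scaling. This lets me convert the open condition $s_i\neq s_j$ into the closed normalized condition $(s_j-s_i)^T(c_j-c_i)\geq 1$: indeed $(s_j-s_i)^T(c_j-c_i)$ is a fixed nonzero number for any separating diagram — nonzero because if $(s_j-s_i)\perp(c_j-c_i)$ while $s_j\neq s_i$, I still need to rule it out; here I use that $C_i\not\subset P_j$ and $C_j\not\subset P_i$, which forces the averages $c_i,c_j$ to lie strictly on the respective sides of $H_{ij}$, giving $(s_j-s_i)^Tc_i < \gamma_j-\gamma_i < (s_j-s_i)^Tc_j$, hence $(s_j-s_i)^T(c_j-c_i)>0$; then rescale so that the minimum over all pairs of this quantity is exactly $1$.

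For the converse, given $(S,\gamma)$ feasible for the linear system, the second family of constraints immediately gives $s_i\neq s_j$ for all $i<j$ (since $(s_j-s_i)^T(c_j-c_i)\geq 1>0$ is impossible when $s_i=s_j$). Together with the first family, the conditions recalled before the lemma then say $(S,\gamma)$ yields a separating $(S,\gamma)$-power diagram — but I should double-check that the full-dimensionality $\dim(P_i)=d$ required in Definition \ref{def:newPD} is genuinely implied. The cleanest route is to note that with $k$ distinct sites and the strict inequalities $(s_j-s_i)^Tc_i<\gamma_j-\gamma_i$ obtained by combining the two constraint families evaluated appropriately (the point $c_i$ is a convex combination of points of $C_i$, each satisfying its inequality, and at least one is strict because $C_i\not\subset$ the separating hyperplane follows from $(s_j-s_i)^T(c_j-c_i)\geq 1$ forcing $c_i,c_j$ to different sides), the point $c_i$ lies in the interior of $P_i$, so $P_i$ is full-dimensional. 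Also $C_i\not\subset P_j$ follows since some point of $C_j$ — or the average $c_j$ — is strictly on the $P_j$ side, so not all of $C_i$ can sit on $H_{ij}$.

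The main obstacle I anticipate is the careful handling of boundary behavior: the first constraint family uses weak inequalities (the paper stresses this is intentional), so a priori a feasible $(S,\gamma)$ could put entire clusters $C_i$ onto a separating hyperplane $H_{ij}$, which would both violate $\dim(P_i)=d$-type reasoning if done naively and violate the $C_i\not\subset P_j$ clause of Definition \ref{def:spd}. The resolution is exactly the role of the normalization constraint $(s_j-s_i)^T(c_j-c_i)\geq 1$: it certifies that the cluster averages $c_i$ and $c_j$ are separated with a definite gap, which propagates to the statement that neither cluster can be entirely contained in $H_{ij}$ and that each $c_i$ is an interior point of its cell. Making this propagation precise — in particular deducing a strict inequality for $c_i$ from weak inequalities for the individual $x_l\in C_i$ plus the normalization — is the one place requiring a genuine (if short) argument rather than pure unwinding of definitions.
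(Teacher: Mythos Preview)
Your ``only if'' direction follows the paper exactly: average the first family of constraints over each cluster to obtain $(s_j-s_i)^Tc_i\le\gamma_j-\gamma_i\le(s_j-s_i)^Tc_j$, rule out equality using $C_i\not\subset P_j$, and rescale $(S,\gamma)$ by the reciprocal of the minimum positive gap $\delta$. This is in fact the only direction the paper's proof treats explicitly; the converse is referred back to the paragraph preceding the lemma, which asserts that the first constraint family together with $s_i\neq s_j$ already yields a separating power diagram.

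Where your proposal diverges is in trying to verify that converse carefully, and here there is a genuine gap. You claim the normalization $(s_j-s_i)^T(c_j-c_i)\ge 1$ forces $c_i$ and $c_j$ to lie strictly on opposite sides of $H_{ij}$, so that neither cluster lies in $H_{ij}$ and $c_i\in\operatorname{int}(P_i)$. But the chain $(s_j-s_i)^Tc_i\le\gamma_j-\gamma_i\le(s_j-s_i)^Tc_j$ together with a gap of at least $1$ between the outer terms only guarantees that \emph{one} of the two inequalities is strict, not both. Concretely, take $k=2$, $d=1$, $C_1=\{0\}$, $C_2=\{1,2\}$, $s_1=0$, $s_2=1$, $\gamma_1=\gamma_2=0$: both constraint families are satisfied, yet $C_1\subset H_{12}=\{0\}\subset P_2$, so this $(S,\gamma)$ fails the clause $C_i\not\subset P_j$ of Definition~\ref{def:spd}. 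The lemma is an existence statement, so it survives (a different choice of $\gamma$ works in the example), but your claim that the feasible $(S,\gamma)$ itself already gives a separating power diagram is incorrect. In fairness, the paper's pre-lemma assertion glosses over the same point; your instinct to scrutinize this direction was right, but the resolution you propose does not go through.
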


\begin{proof}
Recall that the $c_i$ refer to the arithmetic means of the clusters, and that they satisfy $c_i\neq c_j$ for $i\neq j$. $c_i$ lies in the convex hull of the points in $C_i$, and thus the first type of constraints implies $(s_j-s_i)^Tc_i  \leq  \gamma_j-\gamma_i \leq (s_j-s_i)^Tc_j$, which yields $(s_j-s_i)^T(c_j-c_i) \geq 0$. If $(s_j-s_i)^T(c_j-c_i) = 0$, the only case in which the first type of constraints is satisfied is when all points in $C_i$ and $C_j$ lie in the separating hyperplane of cells $P_i$ and $P_j$; but this is a contradiction to $C_i \not\subset P_j$.

Thus there is a $\delta$ with $(s_j-s_i)^T(c_j-c_i) \geq \delta > 0$ for all $i \neq j$. The cells of a power diagram are invariant under uniform scaling and translation of both the sites and $\gamma$-values, see \cite{aha-98,b-10}: The cells of a $(S,\gamma)$-power diagram are the same as those of a $(\frac{1}{\delta}\cdot S, \frac{1}{\delta} \cdot \gamma)$-power diagram, and the sites of the latter satisfy $(s_j-s_i)^T(c_j-c_i) \geq 1$. This proves the claim. \qed
\end{proof}

Lemma \ref{lem:25} tells us that one is able to efficiently find a separating power diagram for a given clustering if one exists, or decide that there is none. In particular, it is possible to construct this power diagram with points on the boundary of the cells (in contrast to the common state-of-the-art alltogether multiclass support vector machines models transferred to the context of hard separation). This is helpful when there is no strictly-separating power diagram for the clustering at hand.

If the sites $S$ are fixed, the second set of conditions is not necessary. The normals of the hyperplanes are then collinear to $s_j-s_i$ for clusters $C_i$ and $C_j$. For these directions, we only have to compute the variables $\gamma$ that determine the hyperplane positioning. 

In this case, one may actually preprocess the values $(s_j-s_i)^Tx_l$ to obtain constants $\sigma_{ij}'= \max \limits_{x\in C_i} (s_j-s_i)^Tx$, for which we can denote the system as a much smaller linear program:
\begin{corollary}\label{cor:26}
Let $\mathcal{C}$ be a clustering of $X$, and let $S$ be given. Then $\mathcal{C}$ allows a separating $S$-power diagram if and only if
$$
\begin{array}{lrclcl}
  &\sigma_{ij}'  & \leq & \gamma_j-\gamma_i &  & (i \leq k, j \leq k: i\neq j)\\
\end{array}
$$
has a feasible solution.
\end{corollary}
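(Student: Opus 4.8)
The plan is to read Corollary~\ref{cor:26} off Lemma~\ref{lem:25}: once the sites are pinned down to the given set $S$, the two families of constraints of that lemma collapse into a single inequality per ordered pair of clusters, and feasibility is preserved throughout.

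First I would treat the forward implication. If $\mathcal{C}$ allows a separating $S$-power diagram, write it as an $(S,\gamma)$-power diagram; then $C_i\subset P_i$ means $(s_j-s_i)^Tx\le\gamma_j-\gamma_i$ for every $x\in C_i$ and every $j\neq i$, and taking the maximum over $x\in C_i$ yields $\sigma_{ij}'\le\gamma_j-\gamma_i$, so the displayed system is feasible. For the converse, the key observation is that the second family of constraints in Lemma~\ref{lem:25}, namely $(s_j-s_i)^T(c_j-c_i)\ge 1$, is there only to encode $s_i\neq s_j$ (the rescaling of $(S,\gamma)$ in the proof of that lemma being what turns $>0$ into $\ge 1$). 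Since the sites of the given $S$ are pairwise distinct, this requirement holds automatically, so for fixed $S$ the only constraints that matter are
\[
(s_j-s_i)^Tx_l\ \le\ \gamma_j-\gamma_i\qquad(l\le n,\ j\le k:\ j\neq i:=c(l)).
\]
Feasibility of these already gives $C_i\subset P_i$ for all $i$; and, repeating the averaging argument from the proof of Lemma~\ref{lem:25} (each $c_i$ lies in the convex hull of $C_i$, and $c_i\neq c_j$), one obtains $(s_j-s_i)^T(c_j-c_i)\ge 0$, with equality ruled out because it would force all of $C_i$ and $C_j$ onto the common separating hyperplane of $P_i$ and $P_j$, contradicting $C_i\not\subset P_j$. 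Hence $P$ is a genuine separating $S$-power diagram.

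The last step is the size reduction: for a fixed ordered pair $i\neq j$, all inequalities $(s_j-s_i)^Tx_l\le\gamma_j-\gamma_i$ with $c(l)=i$ share the right-hand side $\gamma_j-\gamma_i$, so their conjunction is equivalent to $\max_{x\in C_i}(s_j-s_i)^Tx=\sigma_{ij}'\le\gamma_j-\gamma_i$; ranging over all ordered pairs turns the (restricted) system of Lemma~\ref{lem:25} into the system in the statement, now with only $k$ variables and $k(k-1)$ constraints and with identical feasibility status. I expect the only point needing genuine care to be the first step of the converse --- justifying, once $S$ is pinned down and the rescaling freedom of Lemma~\ref{lem:25} is no longer available, that discarding the $s_i\neq s_j$-constraints does not weaken the characterization --- and this reduces to the (immediate) observation that those constraints are already met by the given distinct sites, with the separation property $C_i\not\subset P_j$ coming as before from the convex-hull argument.
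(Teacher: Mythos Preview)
Your overall route matches the paper's: for fixed distinct sites the second family of constraints in Lemma~\ref{lem:25} (introduced only to linearize $s_i\neq s_j$) is redundant, and the first family collapses to the $\sigma_{ij}'$ system by taking the cluster-wise maximum. The paper states Corollary~\ref{cor:26} without a formal proof, relying on exactly this reduction, so in spirit you are doing what the paper does.

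There is, however, a circularity in your converse argument. You write that $(s_j-s_i)^T(c_j-c_i)=0$ is ruled out because it would force $C_i$ and $C_j$ onto the common hyperplane, ``contradicting $C_i\not\subset P_j$''. But $C_i\not\subset P_j$ is part of the definition of a separating power diagram and is precisely what you are trying to establish at this point of the converse; you cannot invoke it to derive a contradiction. The averaging argument in the proof of Lemma~\ref{lem:25} runs in the opposite direction (it \emph{assumes} a separating power diagram and deduces strict inequality), and it does not simply reverse. The paper sidesteps this by appealing to the pre-lemma assertion that the first family of constraints together with $s_i\neq s_j$ already yields a separating $(S,\gamma)$-power diagram; if you want to supply more detail than the paper, you need a direct argument that feasibility of the $\sigma_{ij}'$ system forces some point of $C_i$ strictly inside $P_i$ relative to each $H_{ij}$, rather than recycling the convex-hull step with its conclusion assumed.
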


We are particularly interested in clusterings of large {\bf margin $\epsilon$}, i.e. a large minimal Euclidean distance of a point $x_l\in C_i$ to the boundary of its cell. This corresponds to a large minimal distance to one of the hyperplanes $H_{ij}$ defining the cell. To obtain this geometric interpretation, the variables and parameters in the above constraints have to refer to geometric distances, which we obtain by dividing both $s_j-s_i$ and $\gamma_j-\gamma_i$ by $\| s_j-s_i \|$ for all $i\neq j$. This is a crucial step to obtaining the point counting properties of the program that we prove in the next chapter. For fixed sites, the program remains linear, but for free sites we obtain a nonlinear set of constraints. 

Let us provide a formal definition for the power diagrams which satisfy these constraints, and for the margin of a separating power diagram. Let here, and throughout the remainder of this paper $s_{ij}:= \frac{s_j-s_i}{\| s_j-s_i \|}$, $\gamma_{ij}:= \frac{\gamma_j-\gamma_i}{\| s_j-s_i \|}$

\begin{definition}[$\epsilon$-margin separating power diagram]\label{def:eSPD}
Let $\mathcal{C}$ be a clustering of $X$, and let $P$ be an $S$-power diagram. $P$ is an {\bf $\epsilon$-margin separating power diagram} for $\mathcal{C}$ if and only if $$
\begin{array}{lrclcl}
  & s_{ij}^Tx_l + \epsilon & \leq & \gamma_{ij} &  & (l\leq n,  j \leq k: j\neq i:=c(l))\\
& (s_j-s_i)^T(c_j-c_i) & \geq & 1 & &  (i < j \leq k)
\end{array}
$$
$\epsilon$ then is the {\bf margin} of $P$.
\end{definition}
Note that we do not ask for $\epsilon \geq 0$ in the above definition. Informally, a margin of $\epsilon < 0$ means that it is 'ok' for a point to be on the wrong side of a separating hyperplane, but not more than Euclidean distance $\epsilon$ from it.  Fur further justification of such an approach see e.g. \cite{pwhb-03}.

Complementing the constraints in Lemma \ref{lem:25} and Definition \ref{def:eSPD} by an objective function maximizing the margin $\epsilon$, we obtain the following program:
$$
\begin{array}{lrclcl}
(P_{\text{SPD}}) &  \multicolumn{3}{c}{\max\, \epsilon}        &  &     \\
  & s_{ij}^Tx_l + \epsilon & \leq & \gamma_{ij} &  & (l\leq n,  j \leq k: j\neq i:=c(l))\\
& (s_j-s_i)^T(c_j-c_i) & \geq & 1 & &  (i < j \leq k)
\end{array}
$$
The constraints of type $s_{ij}^Tx_l + \epsilon  \leq  \gamma_{ij}$ are non-convex for free sites. Conversely, for fixed sites, the above is a linear program. We call the power diagram which is optimal for this program the {\bf maximum-margin power diagram} of $\mathcal{C}$. Let us sum up our construction.

\begin{theorem}\label{thm:SPCcomp}
Let $\mathcal{C}$ be a clustering of $X$. Then a maximum-margin separating power diagram of X corresponds to a global optimum of $(P_{\text{SPD}})$
\end{theorem}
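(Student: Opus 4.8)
The plan is to set up a margin-preserving correspondence between the feasible solutions $(S,\gamma,\epsilon)$ of $(P_{\text{SPD}})$ and the separating power diagrams of $\mathcal{C}$, under which the objective value $\epsilon$ equals the margin of the associated diagram (Definition~\ref{def:eSPD}); optimality on the two sides then matches up automatically, and the theorem follows. The only genuine tool needed is the scale-invariance of power-diagram cells already used in the proof of Lemma~\ref{lem:25}, together with the fact that the normalized data $s_{ij}=\tfrac{s_j-s_i}{\|s_j-s_i\|}$ and $\gamma_{ij}=\tfrac{\gamma_j-\gamma_i}{\|s_j-s_i\|}$ are themselves invariant under a common positive scaling of the $s_i$ and $\gamma_i$ (numerator and denominator scale by the same factor).

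First I would take a feasible $(S,\gamma,\epsilon)$ of $(P_{\text{SPD}})$. The constraint $(s_j-s_i)^T(c_j-c_i)\ge1$ together with $c_i\ne c_j$ forces $s_i\ne s_j$ for all $i\ne j$, so every hyperplane $H_{ij}$ and every cell $P_i$ of Definition~\ref{def:newPD} is well defined. If $\epsilon\ge0$ (the case $\epsilon<0$ corresponds to $\mathcal{C}$ admitting no separating power diagram, and then that half of the statement is vacuous), the inequalities of $(P_{\text{SPD}})$ contain exactly those of Lemma~\ref{lem:25}, so $P:=(P_1,\dots,P_k)$ is a separating power diagram for $\mathcal{C}$ in the sense of Definition~\ref{def:spd}; full-dimensionality of the cells is part of what Lemma~\ref{lem:25} delivers and is unaffected by the rescaling used there. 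Since $\|s_{ij}\|=1$, the number $\gamma_{ij}-s_{ij}^Tx_l$ is exactly the signed Euclidean distance from $x_l$ to $H_{ij}$, measured with the sign that is nonnegative on the $P_i$-side; hence the constraints of $(P_{\text{SPD}})$ say precisely that every point of $C_i$ lies on the correct side of every bounding hyperplane of $P_i$ at distance at least $\epsilon$, so the margin of $P$ is at least $\epsilon$, and for the largest $\epsilon$ admissible for this fixed $(S,\gamma)$ it equals the margin of $P$.

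Conversely, given a separating power diagram $P'$ for $\mathcal{C}$ of margin $\epsilon'$, I would fix any representation $(S',\gamma')$ of it and reuse the estimate from the proof of Lemma~\ref{lem:25}: since $c_i\in\mathrm{conv}(C_i)\subset P'_i$, one gets $(s'_j-s'_i)^T(c_j-c_i)\ge\delta>0$ for all $i\ne j$. Rescaling $(S',\gamma')\mapsto(\tfrac1\delta S',\tfrac1\delta\gamma')$ leaves the cells (hence $P'$ and its margin) unchanged, leaves $s'_{ij}$ and $\gamma'_{ij}$ unchanged, and enforces $(s_j-s_i)^T(c_j-c_i)\ge1$; so the rescaled triple is feasible for $(P_{\text{SPD}})$ with objective value $\epsilon'$. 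Combining the two directions yields $\mathrm{opt}(P_{\text{SPD}})=\max\{\,\mathrm{margin}(P):P\text{ a separating power diagram for }\mathcal{C}\,\}$ — the maximum being attained because $(P_{\text{SPD}})$ is feasible and, by pairing the constraints of two nonempty clusters $C_i$ and $C_j$, bounded via $2\epsilon\le s_{ij}^T(x_m-x_l)\le\max_{p,q}\|x_p-x_q\|$. Hence a global optimum of $(P_{\text{SPD}})$ is precisely a separating power diagram whose margin equals $\mathrm{opt}(P_{\text{SPD}})$, i.e. a maximum-margin one, and conversely.

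The step I expect to be the only real obstacle is the bookkeeping in the third paragraph: verifying that the normalization $(s_j-s_i)^T(c_j-c_i)\ge1$ can be imposed ``for free'', i.e. that the rescaling that enforces it disturbs neither the geometric object (the cells of the power diagram) nor the quantity being optimized (the normalized margin $\gamma_{ij}-s_{ij}^Tx_l$). The remaining ingredients — well-definedness of the hyperplanes, the distance interpretation of $\gamma_{ij}-s_{ij}^Tx_l$, and feasibility and boundedness of the program — are routine once Lemma~\ref{lem:25} is available.
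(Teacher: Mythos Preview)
The paper does not give a separate proof of this theorem; it is presented as a summary of the preceding construction (the discussion from Lemma~\ref{lem:25} through Definition~\ref{def:eSPD} to the program $(P_{\text{SPD}})$), and indeed the term ``maximum-margin power diagram'' is \emph{defined} in the text immediately before the theorem as the optimum of $(P_{\text{SPD}})$. Your argument makes this correspondence explicit and is correct; it follows exactly the route the paper lays out implicitly, including the scale-invariance step borrowed from the proof of Lemma~\ref{lem:25} to enforce the normalization constraint without altering the cells or the normalized quantities $s_{ij},\gamma_{ij}$.
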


For fixed sites, we only have to keep the first set of linear inequalities. Using $\sigma_{ij}:= \max \limits_{x\in C_i} s_{ij}^Tx$, we obtain the linear program 
$$
\begin{array}{lrclcl}
(P_{\text{SPD}}') & \multicolumn{3}{c}{\max\, \epsilon}        &  &     \\
&  \sigma_{ij}  + \epsilon & \leq & \gamma_{ij} & &  (i \leq k, j \leq k: i\neq j).
\end{array}
$$
and 
\begin{corollary}\label{cor:SPCcomp}
Let $\mathcal{C}$ be a clustering of $X$, and let $S$ be given.   Then a maximum-margin separating $S$-power diagram of X corresponds to an optimum of the linear program $(P_{\text{SPD}}')$.
\end{corollary}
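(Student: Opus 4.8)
The plan is to show that $(P_{\text{SPD}}')$ is precisely the restriction of $(P_{\text{SPD}})$ obtained by fixing the sites $S$, and that the preprocessing step of replacing the $n$ point-constraints by the $k(k-1)$ aggregated constraints $\sigma_{ij}+\epsilon\leq\gamma_{ij}$ leaves the feasible region in $(\epsilon,\gamma)$-space unchanged. Combining this with Theorem~\ref{thm:SPCcomp} then immediately gives the claim, so the corollary is essentially a bookkeeping argument rather than a new result.

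First I would note that, with $S$ fixed, the second family of constraints $(s_j-s_i)^T(c_j-c_i)\geq 1$ in $(P_{\text{SPD}})$ no longer involves any optimization variables: these inequalities are either satisfied by the given $S$ or not. By the discussion preceding Lemma~\ref{lem:25} (the scaling invariance of power-diagram cells under uniform scaling of $S$ and $\gamma$), one may assume without loss of generality that $S$ is scaled so that $(s_j-s_i)^T(c_j-c_i)\geq 1$ holds whenever $c_i\neq c_j$ — which is our standing assumption — hence this block of constraints can be dropped, and the maximum-margin separating $S$-power diagram corresponds to a global optimum of the program with only the first block $s_{ij}^Tx_l+\epsilon\leq\gamma_{ij}$ for all $l\leq n$ and all $j\neq i:=c(l)$, together with the normalization $s_{ij}=(s_j-s_i)/\|s_j-s_i\|$, $\gamma_{ij}=(\gamma_j-\gamma_i)/\|s_j-s_i\|$.

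Next I would carry out the aggregation. Fix an ordered pair $(i,j)$ with $i\neq j$. The constraints indexed by points $x_l\in C_i$ against cell $P_j$ all read $s_{ij}^Tx_l+\epsilon\leq\gamma_{ij}$, and for fixed $(\epsilon,\gamma)$ the system of these $|C_i|$ inequalities is feasible if and only if $\max_{x\in C_i}s_{ij}^Tx+\epsilon\leq\gamma_{ij}$, i.e. $\sigma_{ij}+\epsilon\leq\gamma_{ij}$. Ranging over all ordered pairs $(i,j)$ shows that a pair $(\epsilon,\gamma)$ is feasible for the reduced program above if and only if it is feasible for $(P_{\text{SPD}}')$; in particular the two programs have the same optimal value and the same set of optimal $\gamma$, hence the same optimal $S$-power diagram. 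Finally, since the $\sigma_{ij}$ are constants once $S$ is given and the only remaining variables are $\epsilon$ and $\gamma_1,\dots,\gamma_k$ appearing linearly (note $\gamma_{ij}$ is a fixed positive multiple of $\gamma_j-\gamma_i$), $(P_{\text{SPD}}')$ is a linear program, and an optimum of it corresponds to a maximum-margin separating $S$-power diagram of $X$ by Theorem~\ref{thm:SPCcomp}.

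I do not expect any genuine obstacle here; the only point requiring a little care is the treatment of the $(s_j-s_i)^T(c_j-c_i)\geq 1$ block — one must invoke the scaling invariance of power diagrams to argue that dropping it (after an appropriate rescaling of the fixed sites) does not change which cell decompositions arise, rather than claiming the constraint is vacuous. A secondary subtlety worth a sentence is that $\sigma_{ij}$ is well defined because each $C_i$ is finite and nonempty, so the maximum is attained; and that the reduction from $n$ to at most $k(k-1)$ constraints is what makes the linear program "much smaller," matching the remark before Corollary~\ref{cor:26}.
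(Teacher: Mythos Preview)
Your proposal is correct and follows essentially the same approach as the paper, which in fact does not give a standalone proof: the corollary is stated immediately after the remark that for fixed sites the second block of constraints is not needed and the first block can be aggregated via $\sigma_{ij}=\max_{x\in C_i}s_{ij}^Tx$. Your write-up simply fills in these two steps carefully; the only minor difference is that the paper drops the block $(s_j-s_i)^T(c_j-c_i)\geq 1$ by noting it was introduced solely to enforce $s_i\neq s_j$ (already assumed for fixed distinct sites), whereas you invoke scaling invariance --- both justifications are valid.
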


W.l.o.g. $\gamma_1=0$, then the linear program has $k-1$ variables and $k\cdot (k-1)$ constraints.

\section{Soft power diagrams}
In this chapter, we extend programs $(P_{\text{SPD}})$ and $(P_{\text{SPD}}')$ by a soft separation scheme. Like the $\nu$-soft-margin support vector machine for two clusters \cite{sswb-00}, our model allows the use of a parameter to prescribe an upper bound on the number of misclassified points. Note that the definition of a misclassified point has to be more involved in our multiclass scenario. As a service to the reader, we begin by formally defining two intuitive concepts in our notation in the next section.

In the succeeding section, we then prove the desired point counting properties, and show that they transfer to the case of fixed sites. Even though our programs are very different from the ones constructed for binary $\nu$-soft separation, investigating the partial derivatives of the underlying Lagrange function will again prove helpful in deriving these properties. For fixed sites, we can find a power diagram adhering to the properties by linear programming.

\subsection{Multiclass support vectors and margin errors}
Recall program $(P_{\text{SPD}})$. For a given power diagram $P$ and a margin $\epsilon$, generally some of the points of a clustering $\mathcal{C}$ do not satisfy the constraints $s_{ij}^Tx + \epsilon \leq \gamma_{ij}$. These are the misclassified points, i.e. the margin errors of the power diagram.

In this section, we formally define two types of margin errors and support vectors in a multiclass scenario. Both have their own advantages and applications. We then describe the separation properties of our soft power diagram for both of these interpretations in Sections $3.2$ and $3.3$. We begin with the case where we consider the classification of a point with respect to each separating hyperplane of the power diagram.

\begin{definition}[Multiclass support vector and margin error]
Let $\mathcal{C}$ be a $k$-clustering of $X$, let $P$ be an $\epsilon$-margin $(S,\gamma)$-power diagram. Then $x\in C_i$ is a {\bf multiclass support vector with respect to $C_j$} if and only if $s_{ij}^Tx + \epsilon \geq \gamma_{ij}$, and a {\bf multiclass margin error with respect to $C_j$} if and only if $s_{ij}^Tx + \epsilon > \gamma_{ij}$.
\end{definition}
We leave out the 'multiclass' part of the wording if the context is clear. By this definition, a single point $x\in C_i$ can be a support vector, respectively margin error, with respect to multiple other clusters $C_j$. Analogously, $x\in C_i$ is a support vector with respect to $P_j$ or the hyperplane in between $P_i$ and $P_j$. When counting the number of support vectors and margin errors (as we do in the next section), such a point counts as $t$ support vectors or margin errors, where $t$ is the number of clusters $C_j$ that it is a multiclass margin error or support vector to. We relate to such a point as a {\bf $t$-fold margin error} (or {\bf support vector)}.

In some applications this notion of $t$-fold margin support vectors or margin errors, which allows points to correspond to $t > 1$ margin errors,  is less desirable than another one: In outlier detection, one might only be interested in which points correspond to at least one multiclass margin error, and does not care about whether these points are badly positioned with respect to one or many hyperplanes. In this case, we use the following definition:
\begin{definition}[Support vector point and margin error point]
Let $\mathcal{C}$ be a $k$-clustering of $X$, let $P$ be an $\epsilon$-margin $(S,\gamma)$-power diagram. Then $x\in C_i$ is a {\bf  support vector point} if and only if there is a $j\neq i$ such that $s_{ij}^Tx + \epsilon \geq \gamma_{ij}$, and a {\bf margin error point} if and only if there is a $j\neq i$ such that $s_{ij}^Tx + \epsilon > \gamma_{ij}$.
\end{definition}

Let us close this section with an example: Consider Figure \ref{fig:MulticlassSM}, which depicts the clustering and soft power diagram of Figure \ref{fig:SoftPD}, along with a margin. The red margin error point on the right-hand side is a multiclass margin error with respect to both the hyperplane separating the red and green, and the hyperplane separating the red and blue clusters; it is (at least) a double multiclass margin error.

\begin{figure}
\begin{center}
\begin{tikzpicture}[scale=0.6]
\useasboundingbox (-5,-8) rectangle (6,5);
\draw[line width=1.5cm, color=gray] (-3,3) -- (0,1);
\draw[line width=1.5cm, color=gray]  (4,3) -- (0,1);
\draw[line width=1.5cm, color=gray]  (0,1) -- (1.5,-2);
\draw[line width=1.5cm, color=gray]  (1.5,-2) -- (-3.5,-4);
\draw[line width=1.5cm, color=gray]  (1.5,-2) -- (5, -5);
\draw (-3,3) -- (0,1);
\draw (4,3) -- (0,1);
\draw (0,1) -- (1.5,-2);
\draw (1.5,-2) -- (-3.5,-4);
\draw (1.5,-2) -- (5, -5);

\draw [dashed] (1.5,-2) -- (-3.75, 2.5);
\draw [dashed] (1.5,-2) -- (5, -0.6);

\fill [blue] (-1.5,0.5) circle (4pt);
\fill [green] (-1.5,-1) circle (7pt);
\fill [blue] (-3.5,0) circle (4pt);
\fill [blue] (-2.5,-1) circle (4pt);
\fill [blue] (-2,-2) circle (4pt);
\fill [red] (-3.5,-1) circle (7pt);
\fill [blue] (-4,-1.5) circle (4pt);
\fill [blue] (-1,0) circle (4pt);
\fill [blue] (-2,-0.75) circle (4pt);

\fill [green] (4.5,-1) circle (4pt);
\fill [green] (5,1) circle (4pt);
\fill [red] (4,1) circle (7pt);
\fill [green] (3.5,1) circle (4pt);
\fill [yellow] (4.5,0.5) circle (7pt);
\fill [green] (3,0) circle (4pt);

\fill [red] (0,-5) circle (4pt);
\fill [red] (1,-6) circle (4pt);
\fill [red] (2,-5.5) circle (4pt);
\fill [red] (1,-5) circle (4pt);
%\fill [red] (-1,-7) circle (4pt);
\fill [yellow] (-2,-5.5) circle (7pt);

\fill [yellow] (0,3) circle (4pt);
\fill [blue] (1,4) circle (7pt);
\fill [yellow] (2,3.5) circle (4pt);
\fill [yellow] (1,3) circle (4pt);
\fill [yellow] (-1,3) circle (7pt);
\fill [yellow] (-1.5,4) circle (4pt);
\end{tikzpicture}
  \end{center}
\caption{An $\epsilon$-margin soft power diagram for four clusters in $\mathbb{R}^2$. The large dots are the support vector points (in fact, all of them are margin error points). The red dot on the right-hand size is (at least) a double multiclass margin error.}
\label{fig:MulticlassSM}
\end{figure}
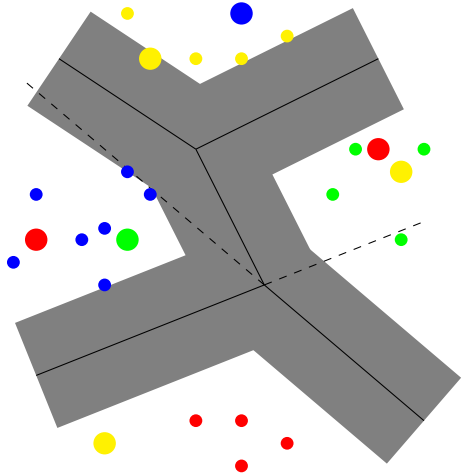

Next, we present how to compute a soft power diagram with a prescribed upper bound on the number of multiclass margin errors. In the succeeding section, we do the same for margin error points.

\subsection{Bounding the number of multiclass margin errors}
We now extend $(P_{\text{SPD}})$ to be able to prescribe an upper bound on the number of multiclass margin errors: Let us introduce variables $\xi_{jl}\geq 0$ for all $l\leq n$ and $j \leq k$ with $j\neq i:=c(l)$.  For a given $\epsilon$-margin $(S,\gamma)$-power diagram, the constraints
$$s_{ij}^Tx_l  +\epsilon  \; \leq \;    \gamma_{ij} +\xi_{jl} \quad (l \leq n, j\leq k: j\neq i:=c(l))$$
can then always be satisfied by choosing $\xi_{jl}>0$ sufficiently large.  Note that, if we choose $\xi_{jl}$ minimal among all feasible values, each component $\xi_{jl}>0$ corresponds to a margin error of point $x_l\in C_i$ with respect to $C_j$. Recall that a single point $x_l$ may yield multiple margin errors, by this definition. Note further that the support vectors which are not margin errors satisfy 
$$s_{ij}^Tx_l  +\epsilon  \; = \;    \gamma_{ij} \quad (l \leq n, j\leq k: j\neq i:=c(l)).$$

We now formally define this construction.  For the sake of a simple notation, let $\xi:=(\xi_{jl})\in \mathbb{R}^{(k-1)n}$ be a vector listing all coefficients $\xi_{jl}$ for $x_l\in C_i$ and $j\neq i:=c(l)$.

\begin{definition}[(Multiclass) soft power diagram]\label{thm:MSPD}
Let $\mathcal{C}$ be a clustering of $X$, and let $S:=\{s_1,\dots,s_k\}\subset \mathbb{R}^d$, $\gamma \in \mathbb{R}^k$, $\epsilon\in \mathbb{R}$ and $\xi:=(\xi_{jl})\geq 0 \in \mathbb{R}^{(k-1)n}$. Finally, let $P$ be an $\epsilon$-margin $(S,\gamma)$-power diagram.
$P$ is an $\epsilon$-margin {\bf (multiclass) soft power diagram} for $\mathcal{C}$ if and only if $$s_{ij}^Tx_l  +\epsilon   \; \leq  \;   \gamma_{ij} +\xi_{jl} \quad  (l \leq n, j\leq k: j\neq i:=c(l)).$$
\end{definition} 
We refer to a soft power diagram by its parameters $(S,\gamma,\epsilon,\xi)$. By the above, we may assume that the $\xi_{jl}$ are all minimal for fixed $(S,\gamma,\epsilon)$, such that $s_{ij}^Tx_l  +\epsilon   \leq   \gamma_{ij} +\xi_{jl}$ is satisfied. Further, if the sites in $S$ are fixed or clear from the context, we refer to the power diagram by the tuple $(\gamma,\epsilon)$.

 To find a 'good' soft power diagram, we have to add a penalty term in the objective function for the margin error values  $\xi_{jl}>0$. For this purpose, we use a parameter $t\in \mathbb{N}$, and add it in the following form
$$
\begin{array}{lrclcl}
(P_{\text{MME}}) & \multicolumn{5}{c}{ \max \quad \Theta_{\text{MME}}(t):=\epsilon - \frac{t+\frac{1}{2}}{t(t+1)} \sum\limits_{l=1}^n \sum\limits_{j\neq c(l)}  \xi_{jl}}   \\
&  s_{ij}^Tx_l  +\epsilon & \leq &   \gamma_{ij} +\xi_{jl} & & (l \leq n, j\leq k: j\neq i:=c(l))\\
& (s_j-s_i)^T(c_j-c_i) & \geq & 1 & &  (i < j \leq k)\\
& \xi_{jl} & \geq & 0 & & (l \leq n, j\leq k: j\neq c(l))
\end{array}
$$
Choosing the penalty on the $\xi_{jl}$ in this way allows us to directly bound the number of margin errors by $t$ in our construction. This is the first version of our main result.
\begin{theorem}\label{thm:NuProperties}
Let $\mathcal{C}$ be a clustering of $X$, and let $t\in \mathbb{N}$. Let further $(S^*, \gamma^*,\epsilon^*, \xi^*)$ be a local optimum of $(P_{\text{MME}})$. Then
 $(S^*,  \gamma^*, \epsilon^*, \xi^*)$ yields a soft power diagram $P$ of maximal margin $\epsilon^*$ for fixed $S^*,\gamma^*$ such that  $t$ is an upper bound on the number of multiclass margin errors for $P$, and $t+1$ is a lower bound on the number of multiclass support vectors for $P$.
\end{theorem}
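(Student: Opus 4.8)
The plan is to analyze the Karush–Kuhn–Tucker (KKT) conditions at a local optimum $(S^*,\gamma^*,\epsilon^*,\xi^*)$ of $(P_{\text{MME}})$, exactly as is done for the binary $\nu$-SVM, but tracking the combinatorial bookkeeping forced by the multiclass counting. First I would set up the Lagrangian: introduce multipliers $\alpha_{jl}\ge 0$ for the constraints $s_{ij}^Tx_l+\epsilon\le\gamma_{ij}+\xi_{jl}$, multipliers $\beta_{ij}\ge 0$ for $(s_j-s_i)^T(c_j-c_i)\ge 1$, and multipliers $\mu_{jl}\ge 0$ for $\xi_{jl}\ge 0$. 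The first thing to observe is that because the $\xi_{jl}$ are taken minimal for fixed $(S^*,\gamma^*,\epsilon^*)$, the claim about maximal margin for fixed $S^*,\gamma^*$ is essentially immediate: $\epsilon^*$ maximizes $\epsilon$ subject to $s_{ij}^Tx_l+\epsilon\le\gamma^*_{ij}$ for the non-error pairs, so $P$ is an $\epsilon^*$-margin separating power diagram on the points that are not margin errors, i.e. the margin is maximal in the sense of Definition~\ref{def:eSPD} restricted to those points. The substance is the counting of errors and support vectors.

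Next I would extract the stationarity conditions. Differentiating the Lagrangian in $\epsilon$ gives $\sum_{l,j\neq c(l)}\alpha_{jl}=\frac{t+\frac12}{t(t+1)}$; differentiating in $\xi_{jl}$ gives $\alpha_{jl}+\mu_{jl}=\frac{t+\frac12}{t(t+1)}$, hence $0\le\alpha_{jl}\le\frac{t+\frac12}{t(t+1)}$ for every pair, with $\alpha_{jl}=\frac{t+\frac12}{t(t+1)}$ whenever $\xi_{jl}>0$ (since then $\mu_{jl}=0$ by complementary slackness) and $\alpha_{jl}=0$ whenever the margin constraint is strict, i.e. whenever $x_l$ is not a support vector w.r.t.\ $C_j$. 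Let $M$ denote the number of multiclass margin errors (pairs with $\xi_{jl}>0$) and $V$ the number of multiclass support vectors (pairs with $\alpha_{jl}>0$, which by the above sits between the error pairs and the active-constraint pairs, so $M\le V$). Summing the $\alpha_{jl}$: the $M$ error pairs alone contribute $M\cdot\frac{t+\frac12}{t(t+1)}$, which cannot exceed the total $\frac{t+\frac12}{t(t+1)}$ only if $M\le\frac{t(t+1)}{\,t(t+1)\,}\cdot\frac{t+\frac12}{t+\frac12}$ — here one checks the arithmetic: $M\le t$ because $M\le \frac{t(t+1)}{t+\frac12}\cdot\frac{t+\frac12}{t(t+1)}\cdot t$... more precisely, $M\cdot\frac{t+\frac12}{t(t+1)}\le\frac{t+\frac12}{t(t+1)}\cdot\text{(total mass)}$ forces $M\le t$ once one notes the total mass equals $\frac{t+\frac12}{t(t+1)}$ and each error pair has weight exactly $\frac{t+\frac12}{t(t+1)}$, so $M\le 1$?? — this is where I must be careful: the scaling $\frac{t+\frac12}{t(t+1)}$ is chosen precisely so that the per-error weight is $\frac{1}{t+\frac12}$ of the total budget, hence $M\le t$ (strictly, $M<t+1$), while the support vectors, each carrying weight at most $\frac{1}{t+\frac12}$ of the budget, must number at least $t+1$ to sum to the full budget. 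I would write this cleanly by dividing through: $\sum\alpha_{jl}=\frac{t+\frac12}{t(t+1)}$, each $\alpha_{jl}\le\frac{t+\frac12}{t(t+1)}\cdot\frac{1}{?}$... — the correct normalization is that $M\cdot\frac{t+\frac12}{t(t+1)} \le \frac{t+\frac12}{t(t+1)}$ is false for $M\ge 2$, so in fact the intended reading must be $\alpha_{jl}\le \frac{1}{t+1}\cdot(\text{something})$; I will reconcile the exact constants against the definition of $\Theta_{\text{MME}}(t)$ when writing the final version, since the coefficient $\frac{t+\frac12}{t(t+1)} = \frac{1}{2t}+\frac{1}{2(t+1)}$ is engineered so that $M \le t$ and $V\ge t+1$ both come out.

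For the lower bound $V\ge t+1$ I would argue: the total $\alpha$-mass is $\frac{t+\frac12}{t(t+1)}$, and every support-vector pair has $\alpha_{jl}\le\frac{t+\frac12}{t(t+1)}$ with equality only possible for error pairs, while non-error support-vector pairs (active but $\xi_{jl}=0$) have $\alpha_{jl}$ strictly less; a clean counting then yields that fewer than $t+1$ support vectors cannot carry the required mass. The remaining technical point is \emph{local} versus \emph{global} optimality: the theorem only assumes a local optimum and $S$ is free, so I must verify that the KKT stationarity conditions in $\epsilon$ and $\xi$ still hold at a local optimum — they do, because $\epsilon$ and the $\xi_{jl}$ appear only in constraints that are locally active/inactive in a stable way and the objective is smooth in them, so local optimality already forces the first-order conditions in these variables (the $S$-variables, where non-convexity lives, never enter the counting argument). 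I expect the main obstacle to be purely the bookkeeping: correctly matching the magic constant $\frac{t+\frac12}{t(t+1)}$ to the two inequalities $M\le t$ and $V\ge t+1$ simultaneously, and handling the edge case where a single point $x_l$ is a $t'$-fold margin error so that the "number of margin errors" counts pairs, not points — the argument must be phrased in terms of the pairs $(j,l)$ throughout, which is exactly the $t$-fold convention fixed in Section~3.1.
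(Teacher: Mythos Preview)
Your overall strategy is the same as the paper's: write the Lagrangian, take stationarity in $\epsilon$ and in the $\xi_{jl}$, and read off the two bounds from the resulting box constraints on the multipliers $\alpha_{jl}$. But you have a concrete arithmetic error in the $\epsilon$-stationarity condition, and this is precisely what derails your counting (your ``so $M\le 1$??'' is the symptom).

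The coefficient of $\epsilon$ in $\Theta_{\text{MME}}(t)=\epsilon-\frac{t+\frac12}{t(t+1)}\sum\xi_{jl}$ is $1$, not $\frac{t+\frac12}{t(t+1)}$. Hence $\partial L/\partial\epsilon=0$ gives
\[
\sum_{l=1}^n\sum_{j\neq c(l)}\alpha_{jl}^*=1,
\]
while $\partial L/\partial\xi_{jl}=0$ gives $\alpha_{jl}^*+\mu_{jl}^*=\frac{t+\frac12}{t(t+1)}$, so $0\le\alpha_{jl}^*\le\frac{t+\frac12}{t(t+1)}$. Now the ``magic constant'' does exactly what you wanted: since $\frac{t+\frac12}{t(t+1)}=\frac12(\frac1t+\frac1{t+1})$ lies strictly between $\frac1{t+1}$ and $\frac1t$, each margin error (where $\xi_{jl}^*>0$, hence $\mu_{jl}^*=0$ and $\alpha_{jl}^*=\frac{t+\frac12}{t(t+1)}>\frac1{t+1}$) contributes strictly more than $\frac1{t+1}$ to a sum equal to $1$, forcing $M\le t$; and each support vector has $\alpha_{jl}^*\le\frac{t+\frac12}{t(t+1)}<\frac1t$, so fewer than $t+1$ of them cannot sum to $1$, forcing $V\ge t+1$. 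With the correct total mass $1$ your confusion disappears and no further bookkeeping is needed.

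One more point: your argument for ``$\epsilon^*$ maximal for fixed $S^*,\gamma^*$'' is not quite the paper's. The paper observes that if one could increase $\epsilon$ beyond $\epsilon^*$, every one of the $\ge t+1$ support vectors would become a margin error, violating the bound $M\le t$ just established; this is cleaner than appealing to minimality of the $\xi_{jl}$.
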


\begin{proof} First, note that  $ \min_{i\neq j} \|s_j-s_i\| \geq \lambda > 0$ due to  $(s_j-s_i)^T(c_j-c_i) \geq  1$. Thus the feasibility region of $(P_{\text{MME}})$ is a closed set. Then there exist  local optima of $(P_{\text{MME}})$, and it is reasonable to discuss properties of these local optima.

Any feasible $(S,\gamma, \epsilon, \xi)$ for $(P_{\text{MME}})$ defines a soft power diagram, by Definition \ref{thm:MSPD}. Let now $(S^*,  \gamma^*, \epsilon^*, \xi^*)$ correspond to a local optimum, and let $L_{\text{MME}}$ refer to the Lagrange function of $(P_{\text{MME}})$.

By using the Lagrange multipliers $\alpha:=(\alpha_{jl}), \beta:=(\beta_{ij}), \delta:=(\delta_{jl})\geq 0$ for to the first, second, and third type of constraints of $(P_{\text{MME}})$ (and the corresponding values of $i,j$ and $l$), we can denote $L_{\text{MME}}=L_{\text{MME}}(S,  \gamma, \epsilon, \xi,\alpha, \beta, \delta)$ as
\begin{eqnarray*}
L_{\text{MME}} &= &\epsilon - \frac{t+\frac{1}{2}}{t(t+1)} \sum\limits_{l=1}^n \sum\limits_{j\neq c(l)} \xi_{jl}+\\
& + & \sum\limits_{l=1}^n \sum\limits_{j\neq c(l)} \alpha_{jl} \cdot ( \gamma_{ij}+ \xi_{jl} - s_{ij}^Tx_l -\epsilon) +\\
& + & \sum\limits_{i=1}^{k-1} \sum\limits_{j= i+1}^k \beta_{ij} \cdot ((s_j-s_i)^T(c_j-c_i)- 1 )+\\
& + & \sum\limits_{l=1}^n \sum\limits_{j\neq c(l)}  \delta_{jl} \cdot \xi_{jl}.
 \end{eqnarray*}
Note that $L_{\text{MME}}$ is differentiable for any $(S^*,  \gamma^*, \epsilon^*, \xi^*,\alpha, \beta, \delta)$. Let now $(\alpha^*, \beta^*, \delta^*)$ be optimal dual variables for $(S^*,  \gamma^*, \epsilon^*, \xi^*)$. 

$L_{\text{MME}}$ has a special saddle point at $(S^*,  \gamma^*, \epsilon^*, \xi^*, \alpha^*, \beta^*, \delta^*)$, and thus the primal partial derivatives of $L_{\text{MME}}$ are equal to $0$ at that point. Let us denote such a partial derivative of $L_{\text{MME}}$ with respect to $\epsilon$ or $\xi_{jl}$ as $\frac{\delta L}{\sigma \epsilon}$, respectively $\frac{\sigma L}{\sigma \xi_{jl}}$. Considering just these two derivatives, we obtain two sets of conditions:
\begin{eqnarray*}
\frac{\sigma L}{\sigma \epsilon}\;=0& =& 1-\sum\limits_{l=1}^n \sum\limits_{j\neq c(l)} \alpha_{jl}^*\\
\frac{\sigma L}{\sigma \xi_{jl}}\;=0& =& -\frac{t+\frac{1}{2}}{t(t+1)} + \alpha_{jl}^* + \delta_{jl}^*\quad \quad   (l \leq n, j\leq k: j\neq c(l))
 \end{eqnarray*}
Let us rewrite this as
\begin{eqnarray}
1 & =& \sum\limits_{l=1}^n \sum\limits_{j\neq c(l)} \alpha_{jl}^*\\
\frac{t+\frac{1}{2}}{t(t+1)} & =& \alpha_{jl}^* + \delta_{jl}^*\quad \quad   (l \leq n, j\leq k: j\neq c(l)).
 \end{eqnarray}
These conditions allow us to investigate the number of multiclass margin errors and support vectors. Note that $x_l\in C_i$ is a support vector for the hyperplane  in between cells $P_i$ and $P_j$ if $\alpha_{jl}^* > 0$.  A margin error corresponds directly to the case $\xi_{jl}^*>0$. Note further that 
$$\frac{1}{t}>\frac{t+\frac{1}{2}}{t(t+1)}=\frac{1}{2}(\frac{1}{t}+\frac{1}{t+1}) >\frac{1}{t+1}.$$
First, let $x_l$ be a support vector. As $\delta_{jl}^*\geq 0$, $(2)$ implies that $\frac{1}{t} > \frac{t+\frac{1}{2}}{t(t+1)}\geq \alpha_{jl}^*$. Thus, at least $t+1$ values of  $\alpha_{jl}^*$ have to be greater than zero, otherwise $(1)$ is not satisfied.

If $x_l$ is a margin error with respect to the hyperplane in between cells $P_i$ and $P_j$, the corresponding $\xi_{jl}^*$ satisfies $\xi_{jl}^*> 0$, which implies that $\delta_{jl}^*=0$. Then $(2)$ implies that the corresponding $\alpha_{jl}^*=\frac{t+\frac{1}{2}}{t(t+1)}>\frac{1}{t+1}$. Due to $(1)$, there are at most $t$ margin errors, otherwise $ \sum\limits_{l=1}^n \sum\limits_{j\neq c(l)}  \alpha_{jl}^* > 1$. 

It remains to prove that $\epsilon^*$ is maximal for the power diagram fixed by $S^*$ and $\gamma^*$, and for which there are at least $t+1$ support vectors and at most $t$ margin errors. By increasing the margin by an arbitrarily small amount, all of the former support vectors become margin errors, which conflicts with the upper bound of at most $t$ margin errors. This proves the claim.  \qed
\end{proof}
These counting properties transfer to the case of fixed sites, as we only used the partial derivatives with respect to $\epsilon$ and the $\xi_{jl}$ for the proof. We obtain the linear program 
$$
\begin{array}{lrclcl}
(P_{\text{MME}}') & \multicolumn{5}{c}{ \max \quad \Theta_{\text{MME}}(t):=\epsilon - \frac{t+\frac{1}{2}}{t(t+1)} \sum\limits_{l=1}^n \sum\limits_{j\neq c(l)}  \xi_{jl}}   \\
&  s_{ij}^Tx_l  +\epsilon & \leq &   \gamma_{ij} +\xi_{jl} & & \quad (l \leq n, j\leq k: j\neq i:=c(l))\\
& \xi_{jl} & \geq & 0 &&\quad  (l \leq n, j\leq k: j\neq c(l))
\end{array}
$$
and
\begin{corollary}\label{cor:NuProperties}
Let $\mathcal{C}$ be a clustering of $X$, let $S$ be given, and let $t\in \mathbb{N}$. Let further $( \gamma^*, \epsilon^*, \xi^*)$ be an optimal solution of $(P_{\text{MME}}')$. Then $( \gamma^*, \epsilon^*, \xi^*)$ yields a soft $S$-power diagram $P$ of maximal margin $\epsilon^*$ for fixed $\gamma^*$ such that $t$ is an upper bound on the number of multiclass margin errors for $P$,  and $t+1$ is a lower bound on the number of multiclass support vectors for $P$.
\end{corollary}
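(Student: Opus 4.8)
The plan is to reduce Corollary~\ref{cor:NuProperties} to the already-established Theorem~\ref{thm:NuProperties} by observing that fixing the sites $S$ turns $(P_{\text{MME}})$ into $(P_{\text{MME}}')$, so that every optimal solution of the linear program is in particular a local optimum of $(P_{\text{MME}})$ in the $(\gamma,\epsilon,\xi)$-coordinates. Concretely, I would first recall that, for fixed $S$, the directions $s_{ij}=\frac{s_j-s_i}{\|s_j-s_i\|}$ and the normalizing denominators $\|s_j-s_i\|$ are constants, so the constraints $s_{ij}^Tx_l+\epsilon\le\gamma_{ij}+\xi_{jl}$ are linear in $(\gamma,\epsilon,\xi)$, and the site-separation constraints $(s_j-s_i)^T(c_j-c_i)\ge 1$ are automatically satisfied (recall $c_i\neq c_j$, so after the uniform scaling used in Lemma~\ref{lem:25} these hold; for the purpose of the counting argument they are simply absent). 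Hence $(P_{\text{MME}}')$ is exactly the restriction of $(P_{\text{MME}})$ to the affine subspace $\{S=S^*\}$, and a global optimum of the LP is a fortiori a local optimum of $(P_{\text{MME}})$ when the site variables are held fixed.

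Next I would re-run the Lagrangian/KKT argument from the proof of Theorem~\ref{thm:NuProperties}, but only with the multipliers $\alpha=(\alpha_{jl})$ and $\delta=(\delta_{jl})$ for the two constraint families that survive in $(P_{\text{MME}}')$; no $\beta$-multipliers are needed. The linear program is feasible (choose $\xi$ large) and bounded above in $\epsilon$ along the relevant face (because increasing $\epsilon$ forces some $\xi_{jl}$ up, and the penalty coefficient $\frac{t+1/2}{t(t+1)}$ is positive), so an optimum exists and strong LP duality supplies optimal $(\alpha^*,\delta^*)\ge 0$ with complementary slackness. The stationarity conditions with respect to $\epsilon$ and $\xi_{jl}$ are the very same equations
$$1=\sum_{l=1}^n\sum_{j\neq c(l)}\alpha_{jl}^*,\qquad \frac{t+\tfrac12}{t(t+1)}=\alpha_{jl}^*+\delta_{jl}^*,$$
and the chain of inequalities $\frac1t>\frac{t+1/2}{t(t+1)}>\frac1{t+1}$ then yields, exactly as before: a support vector contributes $\alpha_{jl}^*\le\frac{t+1/2}{t(t+1)}<\frac1t$, so at least $t+1$ of the $\alpha_{jl}^*$ are strictly positive; and a margin error forces $\delta_{jl}^*=0$, hence $\alpha_{jl}^*=\frac{t+1/2}{t(t+1)}>\frac1{t+1}$, so there are at most $t$ of them. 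Finally, maximality of $\epsilon^*$ for the power diagram fixed by $\gamma^*$ follows from the same perturbation argument: nudging $\epsilon$ upward turns every support vector into a margin error, violating the bound of $t$.

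The only genuinely non-routine point is the justification that the LP optimum is an honest local optimum of the non-convex program $(P_{\text{MME}})$ so that the hypotheses of Theorem~\ref{thm:NuProperties} apply verbatim; but since fixing $S$ only removes variables and constraints and leaves the objective unchanged, this is immediate, and I expect this to be the shortest part of the write-up. Everything else is a transcription of the Theorem~\ref{thm:NuProperties} proof with the $\beta$-terms deleted. I would therefore phrase the proof as: \emph{``Fixing $S$, program $(P_{\text{MME}}')$ is the restriction of $(P_{\text{MME}})$ to $\{S=S^*\}$; an optimal solution of this linear program is in particular a local optimum of $(P_{\text{MME}})$ with the site variables fixed, and the partial-derivative conditions used in the proof of Theorem~\ref{thm:NuProperties} are precisely the stationarity conditions of $(P_{\text{MME}}')$ with respect to $\epsilon$ and $\xi$. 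The claim follows.''}
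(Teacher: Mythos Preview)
Your proposal is correct and matches the paper's approach: the paper justifies the corollary in a single sentence preceding it, namely that the proof of Theorem~\ref{thm:NuProperties} only uses the partial derivatives of the Lagrangian with respect to $\epsilon$ and the $\xi_{jl}$, which are exactly the KKT stationarity conditions of the linear program $(P_{\text{MME}}')$. Your detour through ``the LP optimum is a local optimum of $(P_{\text{MME}})$'' is unnecessary (and not quite well-posed, since local optimality of the full program would also require stationarity in the $S$-variables and feasibility of the site-separation constraints); the direct KKT argument you give in your second paragraph and final summary is already the whole proof.
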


Next, we use a similar construction to obtain a program which provides an upper bound on the number of margin error points and a lower bound on the number of support vector points.

\subsection{Bounding the number of margin error points}
In many applications, we are interested in bounding the number of points that correspond to one or more margin errors. Recall $(P_{\text{MME}})$. For an optimal solution for the fixed $\gamma$ and $\epsilon$, each $\xi_{jl}$ is chosen minimally such that the constraints are feasible. For each $l \leq n$, there are $k-1$ values $\xi_{jl}$ for $j\neq i:=c(l)$. The idea to bounding the number of margin error points instead of multiclass margin errors is to use a variable $\xi_{l}$ which satisfies $\xi_{l}\geq \xi_{jl}$ for all $j\neq i$. Writing $\xi:=(\xi_1,\dots,\xi_n)^T$, we obtain $$s_{ij}^Tx_l  +\epsilon \;  \leq \;   \gamma_{ij} +\xi_{l}\quad  (l \leq n, j\leq k: j\neq i:=c(l)).$$ This yields another variant of our soft power diagrams.

\begin{definition}[(Point-based) soft power diagram]
Let $\mathcal{C}$ be a clustering of $X$, and let $S:=\{s_1,\dots,s_k\}\subset \mathbb{R}^d$, $\gamma \in \mathbb{R}^k$, $\epsilon\in \mathbb{R}$ and $\xi:=(\xi_l)\geq 0 \in \mathbb{R}^k$. Finally, let $P$ be an $\epsilon$-margin $(S,\gamma)$-power diagram.
 $P$ is an $\epsilon$-margin {\bf (point-based) soft power diagram} for $\mathcal{C}$ if and only if $$s_{ij}^Tx_l  +\epsilon \; \leq  \;    \gamma_{ij} +\xi_{l} \quad  (l \leq n, j\leq k: j\neq i:=c(l)).$$
\end{definition}
Using this definition, we can bound the number of margin error points and support vector points by using the following penalty term for the  variables $\xi_{l}$:
$$
\begin{array}{lrclcl}
(P_{\text{MEP}}) & \multicolumn{5}{c}{\max \quad \Theta_{\text{MEP}}(t):=\epsilon -  \frac{t+\frac{1}{2}}{t(t+1)} \sum\limits_{l=1}^n \xi_{l}}   \\
&  s_{ij}^Tx_l  +\epsilon & \leq &  \gamma_{ij} +\xi_{l} & & (l \leq n, j\leq k: j\neq i:=c(l))\\
& (s_j-s_i)^T(c_j-c_i) & \geq & 1 & &  (i < j \leq k)\\
& \xi_{l} & \geq & 0 & &  (l \leq n)
\end{array}
$$
This yields the second variant of our main result.
\begin{theorem}\label{thm:NuProperties2}
Let $\mathcal{C}$ be a clustering of $X$, and let $t\in \mathbb{N}$. Let further $(S^*, \gamma^*, \epsilon^*, \xi^*)$ be a local optimum of $(P_{\text{MEP}})$. Then $(S^*,  \gamma^*, \epsilon^*, \xi^*)$ yields a soft power diagram $P$ of maximal margin $\epsilon^*$ for fixed $S^*,\gamma^*$ such that  $t$ is an upper bound on the number of  margin error points for $P$,  and $t+1$ is a lower bound on the number of support vector points for $P$.
\end{theorem}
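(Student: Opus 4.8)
The plan is to mirror the proof of Theorem~\ref{thm:NuProperties} essentially verbatim, replacing the multiclass-margin-error bookkeeping with the point-based bookkeeping induced by the single variable $\xi_l$ per point. First I would note that the feasibility region is closed, by exactly the same argument: the constraints $(s_j-s_i)^T(c_j-c_i)\geq 1$ force $\min_{i\neq j}\|s_j-s_i\|\geq\lambda>0$, so local optima exist and discussing their properties is meaningful. Any feasible $(S,\gamma,\epsilon,\xi)$ gives a point-based soft power diagram by definition, and for a local optimum $(S^*,\gamma^*,\epsilon^*,\xi^*)$ I would introduce Lagrange multipliers $\alpha:=(\alpha_{jl})\geq 0$ for the first constraint block, $\beta:=(\beta_{ij})\geq 0$ for the second, and $\delta:=(\delta_l)\geq 0$ for the nonnegativity constraints $\xi_l\geq 0$, and write down the Lagrange function $L_{\text{MEP}}$.

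The key computation is the stationarity of the primal partial derivatives at the local optimum. The derivative with respect to $\epsilon$ again gives $1=\sum_{l=1}^n\sum_{j\neq c(l)}\alpha_{jl}^*$, since $\epsilon$ appears once in the objective and once (with coefficient $-1$) in each constraint of the first block. The derivative with respect to $\xi_l$ is the new ingredient: $\xi_l$ appears in the objective with coefficient $-\frac{t+\frac12}{t(t+1)}$, in the $\delta_l$ term with coefficient $+1$, and in \emph{all} $k-1$ constraints $s_{ij}^Tx_l+\epsilon\leq\gamma_{ij}+\xi_l$ for $j\neq c(l)$, each contributing $+\alpha_{jl}^*$. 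Hence $\frac{t+\frac12}{t(t+1)}=\sum_{j\neq c(l)}\alpha_{jl}^*+\delta_l^*$ for every $l\leq n$. These two identities play the role of $(1)$ and $(2)$ in the previous proof, with the crucial difference that the second is now aggregated over $j$.

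From here the counting runs as before, using the same numerical sandwich $\frac1t>\frac{t+\frac12}{t(t+1)}>\frac1{t+1}$. A point $x_l$ is a support vector point iff $\alpha_{jl}^*>0$ for some $j\neq c(l)$, equivalently iff $\sum_{j\neq c(l)}\alpha_{jl}^*>0$; and since each such sum is at most $\frac{t+\frac12}{t(t+1)}<\frac1t$ (as $\delta_l^*\geq 0$), identity~$(1)$ forces at least $t+1$ indices $l$ to have $\sum_{j\neq c(l)}\alpha_{jl}^*>0$, giving the lower bound of $t+1$ support vector points. For the upper bound: if $x_l$ is a margin error point then $\xi_l^*>0$, so by complementary slackness $\delta_l^*=0$, whence $\sum_{j\neq c(l)}\alpha_{jl}^*=\frac{t+\frac12}{t(t+1)}>\frac1{t+1}$; if there were $t+1$ or more margin error points, summing these contributions would give $\sum_l\sum_{j\neq c(l)}\alpha_{jl}^*>1$, contradicting~$(1)$. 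Finally, maximality of $\epsilon^*$ for the fixed diagram follows exactly as before: raising $\epsilon$ by any positive amount turns every support vector point into a margin error point, exceeding the bound of $t$.

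I expect the only genuine subtlety to be the derivative with respect to $\xi_l$: one must be careful that $\xi_l$ occurs in all $k-1$ first-block constraints indexed by $l$, so that the $\alpha$-terms aggregate into $\sum_{j\neq c(l)}\alpha_{jl}^*$ rather than a single $\alpha_{jl}^*$ — this is precisely what converts ``multiclass margin errors'' into ``margin error points'' in the conclusion. Everything else, including the existence of optimal dual variables and the saddle-point property, is inherited word-for-word from the proof of Theorem~\ref{thm:NuProperties}.
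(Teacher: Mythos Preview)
Your proposal is correct and follows essentially the same route as the paper's own proof: form the Lagrangian, read off the stationarity conditions $1=\sum_{l}\sum_{j\neq c(l)}\alpha_{jl}^*$ and $\frac{t+\frac12}{t(t+1)}=\sum_{j\neq c(l)}\alpha_{jl}^*+\delta_l^*$, and use the sandwich $\frac1t>\frac{t+\frac12}{t(t+1)}>\frac1{t+1}$ to count. The only minor slip is the word ``iff'' in your support-vector characterization; complementary slackness only gives the implication $\alpha_{jl}^*>0\Rightarrow x_l$ is a support vector point, but that is the direction your argument actually uses, so the proof stands.
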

\begin{proof}
Analogously to the proof of Theorem \ref{thm:NuProperties}, we consider the Lagrange function $L_{\text{MEP}}$ of $(P_{\text{MEP}})$. Using the Lagrange multipliers $\alpha:=(\alpha_{jl}), \beta:=(\beta_{ij}), \delta:=(\delta_{l})\geq 0$, it takes the form
\begin{eqnarray*}
L_{\text{MEP}} & = & \epsilon - \frac{t+\frac{1}{2}}{t(t+1)} \sum\limits_{l=1}^n \xi_{l}+\\
& + & \sum\limits_{l=1}^n \sum\limits_{j\neq c(l)} \alpha_{jl} \cdot ( \gamma_{ij}+ \xi_{l} - s_{ij}^Tx_l -\epsilon) +\\
& + & \sum\limits_{i=1}^{k-1} \sum\limits_{j= i+1}^k \beta_{ij} \cdot ((s_j-s_i)^T(c_j-c_i)- 1 )+\\
& + & \sum\limits_{l=1}^n  \delta_{l} \cdot \xi_{l}.
 \end{eqnarray*}
Again, the partial derivatives of $L_{\text{MEP}}$ with respect to the primal variables at a saddle point $(S^*,  \gamma^*, \epsilon^*, \xi^*, \alpha^*, \beta^*, \delta^*)$ of optimal primal and dual variables  are equal to zero. We denote the partial derivatives with respect to $\epsilon$ or $\xi_l$ as $\frac{\sigma L}{\sigma \epsilon}$, respectively $\frac{\sigma L}{\sigma \xi_{l}}$, and obtain the set of conditions
\begin{eqnarray}
1 & =& \sum\limits_{l=1}^n \sum\limits_{j\neq c(l)}  \alpha_{jl}^*\\
 \frac{t+\frac{1}{2}}{t(t+1)} & =& \sum\limits_{j \neq c(l)} \alpha_{jl}^* + \delta_{l}^*\quad \quad    (l \leq n, j\leq k: j\neq c(l)).
 \end{eqnarray}
 Note that due to $\alpha^* \geq 0$, $x_l \in C_i$ is a support vector point if $\alpha_{jl}^*> 0$ for a $j\neq i$.  A margin error point corresponds directly to the case $\xi_{l}^*>0$.

By $\delta^* \geq 0$ and $(4)$, we know that $\sum\limits_{j\neq c(l)} \alpha_{jl}^* \leq  \frac{t+\frac{1}{2}}{t(t+1)}<  \frac{1}{t}$ for all $l\leq n$. By $(3)$, for at least $t+1$ points $x_l$, we have $\sum\limits_{j\neq c(l)} \alpha_{jl}^*  > 0$. For each of these points $x_l$, at least one of the values $\alpha_{jl}^*$ satisfies $\alpha_{jl}^*>0$. Thus, all of them are support vector points.

If $x_l$ is a margin error point, then $\xi_l^*>0$, which implies $\delta_l^*=0$. Then $\frac{1}{t} >  \frac{t+\frac{1}{2}}{t(t+1)} = \sum\limits_{j\neq c(l)} \alpha_{jl}^*$ by $(4)$, and then $(3)$ implies that at most $t$ points are margin error points.

The maximality of $\epsilon^*$ with respect to $\gamma^*$ and the given bounds follows by the same arguments as in the proof of Theorem \ref{thm:NuProperties}.  \qed
\end{proof}
Again, the counting properties transfer to the case of fixed sites. We obtain the linear program 
$$
\begin{array}{lrclcl}
(P_{\text{MEP}}') & \multicolumn{5}{c}{\max \quad \Theta_{\text{MEP}}(t):=\epsilon -  \frac{t+\frac{1}{2}}{t(t+1)} \sum\limits_{l=1}^n \xi_{l}}   \\
&  s_{ij}^Tx_l  +\epsilon & \leq &    \gamma_{ij} +\xi_{l} & & \quad (l \leq n, j\leq k: j\neq i:=c(l))\\
& \xi_{l} & \geq & 0 & & \quad (l \leq n)
\end{array}
$$
\begin{corollary}\label{cor:NuProperties2}
Let $\mathcal{C}$ be a clustering of $X$, let $S$ be given, and let $t\in \mathbb{N}$. Let further $( \gamma^*, \epsilon^*, \xi^*)$ be an optimal solution of program $(P_{\text{MEP}}')$. Then $( \gamma^*, \epsilon^*, \xi^*)$ yields a soft $S$-power diagram $P$ of maximal margin $\epsilon^*$ for fixed $\gamma^*$ such that $t$ is an upper bound on the number of  margin error points for $P$, and $t+1$ is a lower bound on the number of support vector points for $P$.
\end{corollary}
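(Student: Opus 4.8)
The plan is to reprise the proof of Theorem~\ref{thm:NuProperties2}, the point being that it used only the stationarity of the Lagrangian in the variables $\epsilon$ and $\xi_l$, and neither these variables nor the constraints containing them are affected when $S$ is fixed. First I would note that $(P_{\text{MEP}}')$ is a linear program that is feasible (take any $\gamma$ and the $\xi_l$ large) and has objective bounded above: adding a constraint coming from a point of $C_i$ to one coming from a point of $C_j$ cancels $\gamma_j-\gamma_i$ and forces $2\epsilon\le\xi_l+\xi_{l'}+\|x_l-x_{l'}\|$, so $\epsilon-\frac{t+\frac12}{t(t+1)}\sum_l\xi_l$ cannot tend to $+\infty$. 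Hence $(P_{\text{MEP}}')$ has an optimal solution $(\gamma^*,\epsilon^*,\xi^*)$, and by LP strong duality there are optimal dual multipliers $\alpha^*=(\alpha_{jl}^*)\ge0$ and $\delta^*=(\delta_l^*)\ge0$ satisfying the KKT conditions.

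Next I would write the Lagrangian $L'_{\text{MEP}}$ of $(P_{\text{MEP}}')$: it is exactly the $L_{\text{MEP}}$ of the proof of Theorem~\ref{thm:NuProperties2} with the $\beta$-block (enforcing $(s_j-s_i)^T(c_j-c_i)\ge1$) deleted and $S$ held fixed. Since that block contains neither $\epsilon$ nor any $\xi_l$, the derivatives $\frac{\sigma L}{\sigma\epsilon}$ and $\frac{\sigma L}{\sigma\xi_l}$ are unchanged, and setting them to zero at the optimum again yields
\[
1=\sum_{l=1}^n\sum_{j\neq c(l)}\alpha_{jl}^*,\qquad
\frac{t+\frac12}{t(t+1)}=\sum_{j\neq c(l)}\alpha_{jl}^*+\delta_l^*\quad(l\leq n).
\]
From here the counting is verbatim that of Theorem~\ref{thm:NuProperties2}: since $\delta_l^*\ge0$, each inner sum is at most $\frac{t+\frac12}{t(t+1)}<\frac1t$, so by the first identity at least $t+1$ points $x_l$ have $\sum_{j\neq c(l)}\alpha_{jl}^*>0$ and hence are support vector points; and a margin error point has $\xi_l^*>0$, forcing $\delta_l^*=0$ and $\sum_{j\neq c(l)}\alpha_{jl}^*=\frac{t+\frac12}{t(t+1)}>\frac{1}{t+1}$, so there are at most $t$ of them.

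Finally, at an optimum $\xi_l^*=\max\{0,\max_{j\neq c(l)}(s_{ij}^Tx_l+\epsilon^*-\gamma_{ij}^*)\}$ (smaller values are infeasible, larger ones only hurt the objective), so $\xi_l^*>0$ really does characterize the margin error points, and $(\gamma^*,\epsilon^*,\xi^*)$ reads off a point-based soft $S$-power diagram. Maximality of $\epsilon^*$ for the fixed $\gamma^*$, subject to at most $t$ margin error points, is the same one-line argument as before: increasing the margin turns every support vector point into a margin error point, violating that bound. The only slightly delicate point is checking that $(P_{\text{MEP}}')$ actually attains its optimum --- feasibility is immediate, boundedness needs the cancellation argument above; once that is in hand, everything else is either a quotation from Theorem~\ref{thm:NuProperties2} or, in the case of the vanishing Lagrangian derivatives, an application of LP strong duality and complementary slackness, which is cleaner than the constraint-qualification argument needed there.
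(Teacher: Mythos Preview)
Your core argument is correct and follows precisely the paper's route: the paper disposes of the corollary in a single line, observing that the counting argument of Theorem~\ref{thm:NuProperties2} uses only the partial derivatives of the Lagrangian with respect to $\epsilon$ and the $\xi_l$, and these are unchanged when the $\beta$-block is dropped and $S$ is held fixed. Your explicit appeal to LP strong duality and complementary slackness to secure the KKT multipliers is a clean replacement for the saddle-point language used in the general (non-convex) case.

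One caveat: your boundedness claim is incorrect, and in any case superfluous. From $2\epsilon\le\xi_l+\xi_{l'}+\|x_l-x_{l'}\|$ alone you cannot conclude that $\epsilon-f_t\sum_m\xi_m$ is bounded above; for instance, increasing $\epsilon$ and \emph{every} $\xi_m$ by the same $\Delta>0$ preserves all constraints and changes the objective by $\Delta(1-nf_t)$, which is positive whenever $nf_t<1$. The paper itself remarks (in the proof of Theorem~\ref{thm:last}) that $(P_{\text{MEP}}')$ is positively unbounded for $t=n$. Since the corollary \emph{assumes} an optimal solution exists, you should simply drop the boundedness discussion rather than try to repair it.
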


\section{Applications}
In this chapter, we exhibit two immediate applications of $(P_{\text{MME}}')$ and $(P_{\text{MEP}}')$ in outlier detection and for the computation of least-squares thresholds. They rely fundamentally on the ability to control the number of margin errors in the construction of a soft power diagram, as denoted in Corollaries \ref{cor:NuProperties} and \ref{cor:NuProperties2}. The feasibility regions of these programs are polytopes and they can be solved by linear programming.

In contrast, the programs $(P_{\text{SPD}})$, $(P_{\text{MME}})$ and $(P_{\text{MEP}})$ optimize linear objective functions over non-convex closed sets. While these programs are not at the heart of our target applications, as a proof of concept, we conclude our discussion with some computations of local optima with tools of nonlinear programming.

Our empiric results were derived using a standard laptop.\footnote[3]{The laptop uses Windows 7, 64 bit, 4 GB of RAM and an Intel Core i7-2630QM CPU at $2.00 \slash 2.00$ Ghz. The linear programs were solved using Xpress Optimizer (Version 3.2.2) by FICO. The nonlinear programs were solved to local optimality using  Ipopt \cite{wb-06}.} We tested our methods on about 20 data sets from the LIBSVM repository for multiclass classification \cite{cc-01} and here report on the representative results for the data sets {\em dna}, {\em vowel}, {\em satimage} and {\em shuttle}, with their provided training and testing sets.

{\em dna} consists of $1400$ training and $1186$ testing points of dimension $180$, partitioned into $3$ clusters, {\em vowel} consists of $528$ training and $462$ testing points of dimension $10$, partitioned into $11$ clusters, {\em satimage} consists of $3194$ training and $2000$ testing points of dimension $36$, partitioned into $6$ clusters and {\em shuttle} consists of $30450$ training and $14500$ testing points of dimension $9$, partitioned into $7$ clusters.

\subsection{Outlier detection}

The presented programs have a direct application in outlier detection: We start with a clustering and a set of representative sites for the clusters, and identify which points create margin errors when constructing a corresponding soft power diagram. These are the points that we consider outliers (or data noise) for their clusters. For this interpretation, both our point counting property and the fact that we use a shared margin for all pairs of clusters are helpful.

Given a  prescribed number $t$, both algorithms then compute a power diagram for the given set of sites and margin $\epsilon$ for which there are at most $t$ multiclass margin errors or $t$ margin error points. 

It is trivial to identify them from the optimal solution of the programs by the conditions $\xi_{jl}>0$ or $\xi_l>0$. We sum up this approach to outlier detection in Algorithm $1$, with respect to margin error points.

\begin{table}
\begin{itemize}
\item {\bf Input}: $d,k,n, t \in \mathbb{N}$,  clustering $\mathcal{C}$ of $X\subset\mathbb{R}^d$, sites $S\subset  \mathbb{R}^d$
\item {\bf Output:} soft $S$-power diagram $P$ defined by $(\gamma, \epsilon,\xi)$ and its set $M\subset X$ of  $|M|\leq t$ margin error points
\item {\bf 1.} Find the optimal solution $(\gamma, \epsilon,\xi)$ (defining soft power diagram $P$)  for the LP
\begin{eqnarray*}
\max \epsilon -  \frac{t+\frac{1}{2}}{t(t+1)} \sum\limits_{l=1}^n \xi_{l}&&\\
\text{ s.t. }\quad s_{ij}^Tx_l  +\epsilon  \leq   \gamma_{ij} +\xi_{l} &&\quad  (l \leq n, j\leq k: j\neq i:=c(l))\\
\xi_{l}  \geq  0 &&\quad (l \leq n)
 \end{eqnarray*}
\item {\bf 2.} Set $M:=\{x_l\in X: \xi_l >0\}$. Return $P$ and $M$.
\end{itemize}
\caption*{Algorithm $1$: {\bf SPD-OD}, outlier detection by a soft power diagram}\label{algo:outlier}
\end{table}

Without expert knowledge, a natural choice for the sites are the arithmetic means $c_i$ of the clusters $C_i$. A classical example for using the arithmetic means of clusters as their representative points is the well-known $k$-means algorithms. 

 In Table \ref{table:linearprograms}, we report on computation times for different values of $t$. As the data sets differ in size, we chose $t$ according to a common percentage of the theoretically maximal number of margin error points. The main part of Algorithm $1$ consists of the solution of a linear program, which explains the favorable running times. To make the short computation times comparable, we loaded the problem into memory once and then solved it ten times from scratch. Obviously the larger linear programs for the data sets (especially {\em shuttle}) took longer to solve, but our results do not reveal an immediate connection of the number of margin errors and the running time.

\begin{table}
\centering
    \renewcommand{\arraystretch}{1.5}
\begin{tabular*}{\textwidth}{@{\extracolsep{\fill}}| c | c | c | c | }
  \hline
  data set & $5\% $ margin errors & $15\% $ margin errors  & $30\% $ margin errors\\
  \hline
 dna  &  $\approx 3$ sec. &  $\approx 3$ sec. &   $\approx 3$ sec. \\
\hline
 vowel  &  $\approx 2$ sec. &  $\approx 4$ sec.  &  $\approx 2$ sec.\\
  \hline
 satimage &  $\approx 20$ sec. &  $\approx 22$ sec.  &   $\approx 23$ sec.\\
\hline
 shuttle  & $\approx 49$ sec. &  $\approx 48$ sec.  &  $\approx 50$ sec.\\
\hline
\hline
\end{tabular*}
\caption{Computation times for {\em ten} applications of Algorithm $1$. The programs were loaded into memory once and then solved ten times. }
\label{table:linearprograms}
\end{table}

\subsection{Least-squares-thresholds}

We now take a closer look at the tradeoff of the number of margin errors and the size of the margin in our programs. Let $(P_{\text{MME}}')(t)$ and $(P_{\text{MEP}}')(t)$ refer to using parameter $t$ in the programs, to obtain optimal objective function values $\Theta^*_{\text{MME}}(t)$ and $\Theta^*_{\text{MEP}}(t)$. Further, let $\epsilon^*_{\text{MME}}(t)$ and $\epsilon^*_{\text{MEP}}(t)$ be the corresponding optimal values for $\epsilon$. Finally, let both $\epsilon^*_{\text{MME}}(0)$ and $\epsilon^*_{\text{MEP}}(0)$ refer to the margin $\epsilon$ as computed in $(P_{\text{SPD}}')$.

An intuitive measure for the 'quality of separation' or separability of a clustering with respect to a soft power diagram is  the smallest bound $t$ on the number of margin errors which yield a nonnegative $\epsilon^*_{\text{MME}}(t)$ or $\epsilon^*_{\text{MEP}}(t)$: It is the smallest bound $t$ such that dropping the (at most $t$) margin errors yields a separating power diagram. Let us prove that the computation of this value is efficiently possible.

\begin{theorem}\label{thm:last}
Let $\mathcal{C}$ be a clustering of $X$, and let $S$ be given. Then it is possible to compute the minimal $t$ for which $\epsilon^*_{\text{MME}}(t) \geq  0$ by solving at most $\lceil \log ((k-1)n)\rceil$ programs of type $(P_{\text{MME}}')$ (and one of type $(P_{\text{SPD}}')$). Also, it is possible to compute the minimal $t$ for which $\epsilon^*_{\text{MEP}}(t)\geq  0$  by solving at most $\lceil \log n \rceil$ programs of type  $(P_{\text{MEP}}')$  (and one of type $(P_{\text{SPD}}')$). 
\end{theorem}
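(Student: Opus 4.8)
The plan is to show that the minimal threshold $t$ can be found by binary search, which requires two ingredients: first, a monotonicity property that guarantees the search is well-defined, and second, a bound on the search interval. For the \emph{monotonicity}, I would argue that $\epsilon^*_{\text{MME}}(t)$ is nondecreasing in $t$. Intuitively, allowing one more margin error can only help: any soft power diagram that is optimal for parameter $t$ is still feasible for parameter $t+1$, and with more slack permitted the optimal margin cannot decrease. More carefully, one should check this at the level of the optimal $\epsilon$ rather than the penalized objective $\Theta^*$, since the penalty coefficient $\frac{t+1/2}{t(t+1)}$ itself changes with $t$. The cleanest way is to observe that by Corollary~\ref{cor:NuProperties}, $\epsilon^*_{\text{MME}}(t)$ equals the largest margin of a soft power diagram having at most $t$ multiclass margin errors; enlarging the allowed count from $t$ to $t+1$ enlarges the set of admissible diagrams, so the maximum margin is nondecreasing. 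Hence the set $\{t : \epsilon^*_{\text{MME}}(t) \geq 0\}$ is an up-set in $\mathbb{N}$, and its minimum is exactly what a binary search locates.

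Next I would pin down the \emph{search range}. The number of multiclass margin error slots is exactly $(k-1)n$ (one variable $\xi_{jl}$ for each $l \le n$ and each $j \neq c(l)$), so the number of multiclass margin errors never exceeds $(k-1)n$; taking $t = (k-1)n$ makes every constraint satisfiable with $\epsilon$ unbounded-from-below-irrelevant — more precisely, with $t$ this large one can certainly achieve $\epsilon^*_{\text{MME}}(t) \geq 0$, since all points may be declared margin errors and $\epsilon$ chosen $\geq 0$ freely (the only remaining constraints are the scaling normalizations on the sites, which are satisfiable). Thus the answer lies in $\{0,1,\dots,(k-1)n\}$, where $t=0$ corresponds to the hard-separation case computed by $(P_{\text{SPD}}')$. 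A binary search over an interval of length $(k-1)n$ uses $\lceil \log((k-1)n)\rceil$ evaluations, each a single linear program $(P_{\text{MME}}')$, plus the one initial solve of $(P_{\text{SPD}}')$ to handle the $t=0$ endpoint. The argument for $\epsilon^*_{\text{MEP}}(t)$ is identical, except the number of margin-error-\emph{point} slots is $n$ (one variable $\xi_l$ per point), giving the bound $\lceil \log n \rceil$ and programs of type $(P_{\text{MEP}}')$.

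I expect the \emph{main obstacle} to be making the monotonicity argument fully rigorous at the level of $\epsilon^*$, because the objective functions of $(P_{\text{MME}}')(t)$ and $(P_{\text{MME}}')(t+1)$ differ (different penalty weights), so one cannot simply compare optimal objective values. The safe route is to decouple: use Corollary~\ref{cor:NuProperties} to characterize $\epsilon^*_{\text{MME}}(t)$ purely combinatorially as $\max\{\epsilon : \exists$ soft $S$-power diagram with margin $\epsilon$ and at most $t$ multiclass margin errors$\}$, and then monotonicity in $t$ is immediate from the nesting of feasible sets. One subtlety to address is whether this maximum is attained (so that "$\geq 0$" is the right comparison rather than a supremum); this follows from the closedness noted in the proof of Theorem~\ref{thm:NuProperties} together with the fact that, with fixed sites, the feasible region of $(P_{\text{MME}}')$ is a polyhedron, so optima exist whenever the problem is bounded — and boundedness of $\epsilon$ is forced by any single constraint $s_{ij}^Tx_l + \epsilon \le \gamma_{ij} + \xi_{jl}$ once we also observe $\gamma$ can be normalized, e.g. $\gamma_1 = 0$, which the paper already does. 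The remaining steps — counting the slot variables and the elementary $\lceil\log\cdot\rceil$ bookkeeping of binary search — are routine.
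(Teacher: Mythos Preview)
Your overall architecture is exactly right: monotonicity of $t\mapsto\epsilon^*(t)$, a finite search range, and binary search. You also correctly flag monotonicity as the crux. The gap is precisely in the step you single out as the ``safe route'': Corollary~\ref{cor:NuProperties} does \emph{not} give the combinatorial characterization you claim. It only says that the optimum $(\gamma^*,\epsilon^*,\xi^*)$ has at most $t$ margin errors and that $\epsilon^*$ is maximal \emph{for the fixed $\gamma^*$}. It does not say $\epsilon^*_{\text{MME}}(t)=\max\{\epsilon:\exists\text{ soft $S$-power diagram with }\le t\text{ margin errors}\}$, and in fact this equality is false. A small counterexample: $k=2$, $d=1$, $s_1=0$, $s_2=1$, $C_1=\{0,100\}$, $C_2=\{1,2\}$, $t=1$. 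The largest margin with at most one margin error is $\epsilon=\tfrac12$ (at $\gamma_{12}=\tfrac12$, with the point $100$ as the sole error of size $100$). But the optimizer of $(P_{\text{MME}}')(1)$ sits at $\gamma_{12}=50.5$, $\epsilon=-49.5$, with \emph{zero} margin errors and objective $-49.5$, beating $\tfrac12-\tfrac34\cdot 100=-74.5$. So $\epsilon^*_{\text{MME}}(1)=-49.5\ll \tfrac12$. Your nesting-of-feasible-sets argument therefore proves monotonicity of the wrong quantity, and your existence argument at $t=(k-1)n$ inherits the same defect.

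The paper's proof avoids this entirely. It never interprets $\epsilon^*(t)$ combinatorially; instead it exploits that the feasibility region of $(P_{\text{MEP}}')(t)$ is independent of $t$ and compares the penalized objectives directly. Writing $f_t=\tfrac{t+1/2}{t(t+1)}$ and assuming $\epsilon^{t+1}<\epsilon^t$, optimality of each solution in its own program gives
\[
\epsilon^t-\epsilon^{t+1}\ \ge\ f_t\Bigl(\textstyle\sum\xi^t-\sum\xi^{t+1}\Bigr)
\quad\text{and}\quad
\epsilon^t-\epsilon^{t+1}\ \le\ f_{t+1}\Bigl(\textstyle\sum\xi^t-\sum\xi^{t+1}\Bigr),
\]
which together with $f_t>f_{t+1}$ and $\sum\xi^t\ge\sum\xi^{t+1}$ (forced by $\epsilon^{t+1}<\epsilon^t$ and feasibility) yields a contradiction. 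Existence of a $t$ with $\epsilon^*(t)\ge 0$ is then obtained from $\epsilon^*(t)\ge\Theta^*(t)$ and the unboundedness of $\Theta^*$ at the top of the range. This algebraic route is short and does not rely on any characterization of $\epsilon^*(t)$ beyond its definition.
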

\begin{proof}
First, we check whether $(P_{\text{SPD}}')$ returns an objective function value of  $\epsilon \geq 0$. If so, there is a separating $S$-power diagram. We set $t=0$, and are done. In the claim, this refers to the parts in brackets.

For $t\geq 1$, we prove the claim by showing that $(\epsilon^*_{\text{MME}}(t))_{1\leq t \leq  (k-1) n}$ and $(\epsilon^*_{\text{MEP}}(t))_{1\leq t \leq n}$ are non-decreasing sequences which contain a non-negative value. Then we can use nested intervals to obtain the minimal $t$, and obtain the claimed number of programs. We start by turning to $(\epsilon^*_{\text{MEP}}(t))_{1\leq t \leq n}$. Only the range $1\leq t \leq n$ is of interest; at most all $n$ points are support vector points.

For a simple notation, let $f_t:=\frac{t+\frac{1}{2}}{t(t+1)}$. Let further $(\gamma^t,\epsilon^t, \xi^t):=(\gamma^*(t), \epsilon^*_{\text{MEP}}(t), \xi^*(t))$ be optimal for $(P_{\text{MEP}}')(t)$, and $\sum\xi^t:= \sum\limits_{l=1}^n \xi_{l}^*(t)$. Also, note that the feasibility region of  $(P_{\text{MEP}}')(t)$ is independent of $t$: We will use this fact to relate the objective function values of $(\gamma^t,\epsilon^t, \xi^t)$ and $(\gamma^{t+1},\epsilon^{t+1}, \xi^{t+1})$ with respect to $(P_{\text{MEP}}')(t)$ and $(P_{\text{MEP}}')(t+1)$:

Clearly, $(\Theta^*_{\text{MEP}}(t))_{1\leq t \leq n}$ is an increasing sequence: By $f_t>f_{t+1}$, we obtain $\Theta^*_{\text{MEP}}(t) < \Theta^*_{\text{MEP}}(t+1)$. Now suppose there is a $t\leq n-1$ such that $\epsilon^{t+1}<\epsilon^t$. Then $\sum\xi^t - \sum\xi^{t+1} \geq 0$, and then
  $$\epsilon^t-f_t\sum\xi^t \geq \epsilon^{t+1}-f_t\sum\xi^{t+1} \quad \Leftrightarrow \quad \epsilon^t-\epsilon^{t+1} \geq f_t(\sum\xi^t - \sum\xi^{t+1})\geq 0,$$ 
as well as $$\epsilon^{t+1}-f_{t+1}\sum\xi^{t+1} \geq \epsilon^t-f_{t+1}\sum\xi^t  \quad \Leftrightarrow \quad  \epsilon^t-\epsilon^{t+1} \leq f_{t+1}(\sum\xi^t - \sum\xi^{t+1}).$$
Combining this to 
$$f_{t+1}(\sum\xi^t - \sum\xi^{t+1}) \geq \epsilon^t-\epsilon^{t+1} \geq f_t(\sum\xi^t - \sum\xi^{t+1})\geq 0$$
is a contradiction with $f_t>f_{t+1}$ .

Thus $(\epsilon^*_{\text{MEP}}(t))_{1\leq t \leq n}$ is a non-decreasing sequence. By $f(t)<\frac{1}{t}$, we have $\Theta^*_{\text{MEP}}(n)=\infty$, i.e. the program is positively unbounded if we allow for the theoretically maximal number of margin error points. Thus there exists a minimal value of $t$ for which $\epsilon^*_{\text{MEP}}(t) \geq \Theta^*_{\text{MEP}}(t) \geq 0$. 

This proves termination of an algorithm which starts with $t=1$, and finds the minimal $t$ by nested intervals bisecting the current search ranges in $1,\dots, n$. In each iteration step, the corresponding linear program is solved. If $\epsilon^* \geq 0$, we terminate the algorithm. This happens after at most the claimed number of steps. 

The claim follows analogously for $(\epsilon^*_{\text{MME}}(t))_{1\leq t \leq  (k-1) n}$. Here we have to consider the range $1\leq t \leq  (k-1) n$, as there are in between one and $(k-1)n$ multiclass support vectors in a power diagram. At most all $n$ points can be multiclass support vectors with respect to all $k-1$ clusters they do not lie in. Except for using the sum $\sum\limits_{l=1}^n \sum\limits_{j\neq c(l)}  \xi_{jl}$, the above arguments remain the same. \qed
\end{proof}
Algorithm $2$ sums up this approach, for margin error points. It is denoted using a tweak for a more efficient implementation: Even though we have to compute solutions for up to $\lceil \log n \rceil$ linear programs, all of them  have the same feasibility regions. The only differences are in the  objective functions, and these are very similar in the later programs of a run of the algorithm. Thus, we keep the basis of active constraints of the preceding optimal solution as starting basis for the succeeding program, which then is solved in almost negligible time in later stages of the algorithm.

The value $\tau_{\text{MEP}}$, derived as the fraction of the final value of $t$ and $n$ -- which is the theoretically maximal number of margin error points --  is a nice measure for the separability of the underlying clustering with respect to a power diagram for the given sites.  The larger it is, the 'more different' the clustering $\mathcal{C}$ is from a (balanced) least-squares assignment with respect to the given sites. This sheds some insight into how well-chosen the sites are as representative points for their clusters. As a byproduct, we also obtain the power diagram itself.
\begin{table}
\begin{itemize}
\item {\bf Input}: $d,k,n \in \mathbb{N}$, clustering $\mathcal{C}$ of $X\subset\mathbb{R}^d$, sites $S\subset \mathbb{R}^d$
\item {\bf Output:} $\tau_{\text{MEP}}$, and the corresponding soft $S$-power diagram $P$
\item {\bf 1.} Solve the LP
\begin{eqnarray*}
\begin{array}{lrclcl}
\multicolumn{4}{c}{\max\, \epsilon}        &       &\\
  &\sigma_{ij}  + \epsilon & \leq &\gamma_{ij} &  &  (i \leq k, j \leq k: i\neq j).
\end{array}
 \end{eqnarray*}
If $\epsilon \geq 0$, return $\tau_{\text{MEP}}=0$ and the optimal solution $(\gamma,\epsilon,\xi)$ (a separating power diagram $P$). Otherwise set $t:=1$, $r:=1$ and set $(\gamma',\epsilon',\xi')=(0, -\infty, 0)$.
\item {\bf 2.} Find the optimal solution $(\gamma,\epsilon,\xi)$ (defining soft power diagram $P$)  for the LP
\begin{eqnarray*}
\max \Theta_{\text{MEP}}(t):= \epsilon -  \frac{t+\frac{1}{2}}{t(t+1)} \sum\limits_{l=1}^n \xi_{l}&&\\
\text{ s.t. }\quad s_{ij}^Tx_l  +\epsilon  \leq    \gamma_{ij} +\xi_{l} &&\quad  (l \leq n, j\leq k: j\neq i:=c(l))\\
\xi_{l}  \geq  0 &&\quad (l \leq n)
 \end{eqnarray*}
by using $(\gamma',\epsilon',\xi')$ as a starting solution.
\item {\bf 3.} If $r< \lceil \log n \rceil$, set $r:=r+1$. Depending on $\epsilon<0$ or $\epsilon \geq 0$, update $t$ to bisect the remaining interval of possible minimal values for $t$. Then set $(\gamma',\epsilon',\xi'):=(\gamma,\epsilon,\xi)$, and go to $2.$. Otherwise return $P$ and $\tau_{\text{MEP}}:=\frac{t}{n}$. 
\end{itemize}
\caption*{Algorithm $2$: {\bf LS-SPD}, threshold-setting soft power diagram}
\label{algo:SPD}
\end{table}
For a given clustering $\mathcal{C}$ and sites $S$, we call the fraction
$$\tau_{\text{MEP}}:=\frac{1}{n} \cdot ( \min\limits_{0 \leq t \leq n} t \text{ s.t. }\epsilon_{\text{MEP}}(t)\geq 0)$$
the {\bf MEP-threshold of $\mathcal{C}$ with respect to $S$}, and the fraction $$\tau_{\text{MME}}:=\frac{1}{(k-1)\cdot n}\cdot (\min\limits_{0 \leq t \leq (k-1)n} t \text{ s.t. }\epsilon_{\text{MME}}(t)\geq 0)$$
the {\bf MME-threshold of $\mathcal{C}$ with respect to $S$}. Analogously to the interpretation of $\tau_{\text{MEP}}$, $\tau_{\text{MME}}$ represents the  percentage of multiclass margin errors against the theoretically maximal number of multiclass margin errors.

Besides measuring the similarity to a balanced $S$-least-squares assignment, the thresholds give apriori information about the quality of a classifier. This is is in contrast to the aposteriori information obtained from applying a classifier to a test set. This is especially useful when a sufficiently large test set is not available. Let us turn to some computational results for this interpretation. We used the arithmetic means of the clusters as sites.

Table \ref{table:nusvm} lists the MEP-threshold, the percentage of misclassified points in the test set and the time required for the computation of the threshold.

\begin{table}
\centering
    \renewcommand{\arraystretch}{1.5}
\begin{tabular*}{\textwidth}{@{\extracolsep{\fill}}| c | c | c | c | }
  \hline
  data set & $\tau_{\text{MEP}}$ & misclassifications  & time elapsed\\
  \hline
 dna  & $\frac{111 }{1400} \approx 7.93\%$ & $\frac{130}{1186}\approx 10.96\%$ &  $< 1$ sec.\\
\hline
 vowel  & $\frac{399}{528} \approx 75.59\%$ & $\frac{316}{462}\approx 68.40\%$  &  $< 1$ sec.\\
  \hline
 satimage  & $\frac{615}{3194} \approx 19.25\%$ & $\frac{393}{2000}\approx 19.65\%$  &  $\approx 6$ sec.\\
\hline
 shuttle  & $\frac{3069}{30450} \approx 10.08\%$ & $\frac{1457}{14500}\approx 10.05\%$  &  $\approx 22$ sec.\\
\hline
\hline
\end{tabular*}
\caption{Empirical results for Algorithm $2$. The MEP-threshold and the percentage of misclassifications are highly correlated.}
\label{table:nusvm}
\end{table}

As expected, the four data sets indicate a high correlation in between the MEP-threshold and the number of misclassifications: The sets {\em dna},  {\em satimage} and {\em shuttle} put up good threshold values, and then consequently also a very low rate of misclassifications. In fact, these percentile numbers are even almost identical for the data sets {\em satimage} and {\em shuttle}. Conversely, the set {\em vowel} did not prove well-separable, with a very high threshold of almost $76\%$. The derived classifier then performed similarly bad. These (and all other) empirical results confirm the interpretation of the MEP-threshold as a viable apriori measure for the quality of the computed soft power diagram as a classifier. 

\subsection{Proof of concept for the general model}

While not at the core of our discussion, let us conclude our applications by turning to some computations of locally optimal solutions for $(P_{\text{SPD}})$ and $(P_{\text{MEP}})$. Using standard modeling techniques, we denoted both as unrestricted problems with penalty terms for violated constraints and then computed a local optimum starting from the arithmetic means of the clusters as beginning points. For $(P_{\text{MEP}})$, we chose $t$ according to a percentage of $10\%$ of the theoretically maximal number of margin errors and stopped iterating with a precision of $0.001$. Table \ref{table:final} reports on our computation times and the number of misclassifications for the derived classifiers.

As expected, we observed that the power diagram derived by $(P_{\text{SPD}})$ (with margin $\epsilon<0$) is significantly outperformed by the power diagram derived by $(P_{\text{MEP}})$ when used as a classifier for all data sets with low MEP-threshold (see Table \ref{table:nusvm}), i.e. those similar to a balanced least-squares assignment. After all, $(P_{\text{SPD}})$  is designed with separable clusters in mind and can easily be influenced by outliers, in contrast to $(P_{\text{MEP}})$. We also observed that $(P_{\text{MEP}})$ often terminated with a local optimum whose sites were close to the starting sites for the data sets with low MEP-threshold. Then the number of misclassifications was also similar.

\begin{table}
\centering
    \renewcommand{\arraystretch}{1.5}
\begin{tabular*}{\textwidth}{@{\extracolsep{\fill}}| c | c | c | c | c | }
 \hline
 & $(P_{\text{SPD}})$ & & $(P_{\text{MEP}})$ & \\
  \hline
  data set & misclassifications  & time elapsed & misclassifications & time elapsed\\
  \hline
 dna  &  $\frac{170}{1186}\approx 14.33\%$ & $\approx 7$ sec. & $\frac{128}{1186}\approx 10.79\%$ & $\approx 59$ sec.\\
\hline
 vowel  & $\frac{308}{462}\approx 66.67\%$  & $\approx 24$ sec. & $\frac{310}{462}\approx 67.10\%$ &  $\approx 198$ sec.\\
  \hline
 satimage  & $\frac{522}{2000}\approx 26.10\%$  & $\approx 33$ sec. &  $\frac{399}{2000}\approx 19.50\%$ &  $\approx 230$ sec.\\
\hline
 shuttle  & $\frac{2590}{14500}\approx 17.86\%$  & $\approx 58$ sec.  &  $\frac{1449}{14500}\approx 10.00\%$ &  $\approx 550$ sec.\\
\hline
\hline
\end{tabular*}
\caption{Empirical results and computation times for locally optimal solutions of $(P_{\text{SPD}})$ and $(P_{\text{MEP}})$ starting from the arithmetic means as sites.}
\label{table:final}
\end{table}

In principle, $(P_{\text{MEP}})$ (and $(P_{\text{MME}})$) may serve as a multiclass support vector machine, but they are not designed with efficiency in mind. The use of shared margins is 'more' than the state-of-the-art multiclass support vector machines do; these benefit from accepting this lack of information by obtaining convex programs. We plan to investigate regularizations (e.g. the use of the $1$-norm in place of the Euclidean norm) and a careful choice of the starting sites for our general model to test its performance in practice in our research.

\bibliographystyle{plain}
\bibliography{sample}
\end{document}